\newcommand{\tableScaleSize}{0.23}
\renewcommand{\tabcolsep}{0.01cm}
\newcommand{\specialcell}[2][c]{%
  \begin{tabular}[#1]{@{}c@{}}#2\end{tabular}}
\newcommand{\norm}[1]{\left\Vert#1\right\Vert}
\newcommand{\set}[1]{\left\{#1\right\}}
\newcommand{\ip}[1]{\left \langle #1 \right \rangle }
\newcommand{\Real}{\mathbb R}
\newcommand{\too}{\rightarrow}
\newcommand{\N}{\mathcal{N}}
\newcommand{\wt}[1]{\widetilde{#1}} %wide tilde
\newcommand{\wh}[1]{\widehat{#1}} %wide hat
\newcommand{\HS}{\wh{\mathcal{S}}}%^{1/2}}
\newcommand{\goal}[1]{}
\newcommand{\newtext}[1]{{\color{black} #1}}
\newcommand{\no}[1]{{\color{black} #1}}
\journalname{Journal of Mathematical Imaging and Vision}
\begin{document}

\title{Photometric Stereo by Hemispherical Metric Embedding
%\thanks{Grants or other notes about the article that should go on the front page should be
%placed here. General acknowledgments should be placed at the end of the article.}
}
%\titlerunning{Short form of title}        % if too long for running head

\author{Ofer Bartal \and Nati Ofir \and Yaron Lipman \and Ronen Basri}

%\authorrunning{Short form of author list} % if too long for running head

\institute{Ofer Bartal \at
           Weizmann Institute of Science \\
           \email{ofer.bartal@weizmann.ac.il}
           \and
           Nati Ofir \at
           Weizmann Institute of Science \\
          \email{yehonatan.ofir@weizmann.ac.il}
          \and
           Yaron Lipman \at
           Weizmann Institute of Science \\
           \email{yaron.lipman@weizmann.ac.il}
           \and
           Ronen Basri \at
           Weizmann Institute of Science \\
           \email{ronen.basri@weizmann.ac.il}           
}

\date{Received: 10.8.16 / Accepted: 21.6.17}
% The correct dates will be entered by the editor

\maketitle

\begin{abstract}
Photometric Stereo methods seek to reconstruct the 3d shape of an object from motionless images obtained with varying illumination. Most existing methods solve a restricted problem where the physical reflectance model, such as Lambertian reflectance, is known in advance. 
In contrast, we do not restrict ourselves to a specific reflectance model. Instead, we offer a method that works on a wide variety of reflectances. Our approach uses a simple yet uncommonly used property of the problem - the sought after normals are points on a unit hemisphere. We present a novel embedding method that maps pixels to normals on the unit hemisphere. Our experiments demonstrate that this approach outperforms existing manifold learning methods for the task of hemisphere embedding. We further show successful reconstructions of objects from a wide variety of reflectances including smooth, rough, diffuse and specular surfaces, even in the presence of significant attached shadows. Finally, we empirically prove that under these challenging settings we obtain more accurate shape reconstructions than existing methods.
\keywords{Photometric Stereo \and Shape from Shading \and Embedding \and Manifold Learning}
% \PACS{PACS code1 \and PACS code2 \and more}
% \subclass{MSC code1 \and MSC code2 \and more}
\end{abstract}

\section{Introduction}
\label{intro}

Photometric stereo (PS) methods aim to recover the shape of objects from collections of motionless images taken with varying illumination. PS is challenging due to a number of reasons. In particular, object reflectance can range from Lambertian to specular with varying degrees of roughness. In addition, surrounding lights may be complex and include multiple light sources along with reflections from surrounding objects. Furthermore, in many practical cases neither the light setting nor the reflectance properties will be known. While most existing methods (see review in Section~\ref{sec:previous_work}) apply PS in restricted settings, for example, assuming Lambertian reflectance or point source lighting, there is growing interest in developing approaches to PS that may be able to handle this problem under more general settings.

This paper takes a generic view of PS. PS can be thought of as the problem of discovering a mapping (or embedding) of intensity measurements to surface normals. Each pixel is associated with a vector of its intensity values in the $n$ input images. Our approach aims to embed these intensity vectors onto the hemisphere of forward facing normal vectors using only local distances between the intensity vectors.

We then use the surface normals to recover a depth map of the object by integration, as done in \cite{RonenSH2}.
Our main contribution in this work is in introducing a novel metric embedding technique which is specifically designed to target the hemisphere. We show that our technique outperforms existing embedding techniques for the task of hemisphere embedding. We further show empirically that by using some general statistical assumptions on the lighting, our embedding technique allows the recovery of shapes under a variety of lighting conditions and reflectance properties, and is robust to attached shadows. We illustrate this on real images by showing reconstructions of Lambertian as well as non-Lambertian surfaces, acquired under unknown complex lighting conditions that include point sources as well as varying ambient illumination.
% XXX Where and how should we state our assumptions on lighting?

Generic embedding methods of high-dimensional point clouds are well known in the machine learning and multi-dimensional scaling (MDS) literature as effective methods for dimensionality reduction (e.g.,~\cite{IsomapPaper,LLE,MDS}). These methods map a manifold residing in high-dimensional space into a low dimensional Euclidean target space, which is typically chosen to be the intrinsic dimension of the manifold. Embedding to a hemisphere is inherently different: while the intrinsic dimension of a hemisphere is 2, its ambient or ``target'' dimension is 3, as we wish to preserve the hemisphere structure and not flatten it into a 2 dimensional space. We show that for existing methods, embedding a hemisphere manifold onto 3 dimensions leads to distortions. For some algorithms, these distortions occur even in the trivial identity mapping of a hemisphere that already lies in a 3 dimensional space. Our method is specifically designed for this kind of embedding and allows direct recovery of surface normals. 

To realize this embedding we use distances between intensity vectors to approximate the Laplace-Beltrami (LB) differential operator over the hemisphere, \newtext{ by solving an optimization problem}. We then make the observation that with suitable boundary conditions on the hemisphere's equator, the eigenfunctions of the LB operator are the spherical harmonic functions. The linear spherical harmonics correspond \textit{exactly} to the X,Y, and Z coordinates of the normals on the hemisphere. \newtext{The idea of spherical harmonics representation of lighting is described in details in \cite{RonenSH1}}. The key observation being that since the Laplacian is a \textit{local} operator, good approximation of \textit{local} distances leads to a robust \textit{global} embedding over the hemisphere.

\section{Previous work}
\label{sec:previous_work}

\no{Early methods of photometric stereo focused on handling Lambertian surfaces under known distant point light sources \cite{Woodham1980}. More advanced methods address the case of a known \cite{Xie_2015_CVPR} or unknown \cite{Uncalibrated-Near-Light} nearby point light source. In addition, other works solve the problem of unknown distant point light sources and Lambertian surface. \cite{Hayakawa1994} solves the problem up to linear ambiguity. By further enforcing integrability, ambiguity reduces to GBR \cite{BasReliefAmbiguity,YuilleIntegrability}. \cite{EntropyMinimizationPS,LDR,Queau2015,SelfCalibratingPS} use a prior on surface albedo or shape to solve the GBR, while \cite{PapadhimitriF13} solves it using a perspective camera model.
Further work generalizes lighting to multi point or arbitrary lighting \cite{RonenSH2,Queau2015_multipoint}, or generalizes to non-Lambertian reflectance with either known \cite{SeitzExampleBasedPS,ConsensusPS} or unknown lighting \cite{Georghiades2003,Athos,Sato2,Midorikawa_2016_CVPR,AttachedShadowCoding,SatoICCV2007,YiMaPS}.
\cite{HeliometricStereo,WebcamPS} extends PS to outdoor webcam images and \cite{Gotardo_2015_ICCV} couples PS with other techniques.
For integrating normals, \cite{Durou2009,Frankot88amethod,Simchony} use a depth map, while \cite{Ho2016,Queau_2016_CVPR,mecca2015realistic,Tozza2016} avoid integration by following a direct differential approach to PS. This idea was also applied to other computer vision problems \cite{Smith2016}. \cite{Durou2009,durou2016} presents robust methods for surface integration in the presence of discontinuities.}

\iffalse

\newtext{
\no{Photometric} stereo algorithms are typically designed to handle specific lighting configurations and/or specific reflectance properties.
Earlier methods address the case of Lambertian surfaces under known lighting~\cite{Woodham1980}. More advanced methods solve for a single, unknown point light source per image \cite{Gotardo_2015_ICCV,Queau2015,Xie_2015_CVPR,Ho2016,Uncalibrated-Near-Light,Hayakawa1994,YuilleIntegrability,LDR,EntropyMinimizationPS,SelfCalibratingPS,Georghiades2003}, other reflectances are addressed in \cite{AttachedShadowCoding,Lu_2015_CVPR,Midorikawa_2016_CVPR,Smith2016}.
Recent methods generalize over this by allowing for multi point, or arbitrary lighting~\cite{RonenSH2,Queau2015_multipoint,Queau_2016_CVPR}, or by allowing for non-Lambertian reflectance (with either known lighting~\cite{SeitzExampleBasedPS,ConsensusPS,YiMaPS} or unknown point light sources~\cite{Athos,SatoICCV2007})
and by considering outdoor images acquired with webcams~\cite{WebcamPS,HeliometricStereo}. For Lambertian surfaces, when lighting is unknown,
reconstruction can be obtained using constraints on surface normals \cite{Frankot88amethod,Simchony} up to the Generalized Bas-Relief (GBR) ambiguity~\cite{BasReliefAmbiguity}. Resolving GBR requires certain assumptions on surface albedo and shape \cite{LDR,EntropyMinimizationPS,SelfCalibratingPS} or assuming a perspective projection~\cite{PapadhimitriF13}.
\cite{Durou2009,durou2016} presents robust methods for surface integration in the presence of discontinuities.}

\fi

Sato et al.~\cite{SatoICCV2007} have used a generic embedding technique for photometric stereo, employing the Isomap algorithm~for the embedding \cite{IsomapPaper}. %They also assume that the surface normals at the bounding contour of the object all lie on the equator. 
In \cite{Sato2}, Lu et al. employ an iterative approach that is based on matrix decomposition with missing data. Their method requires the estimation of a reflectance-based scaling parameter. 
They provide extensive experiments to show the behavior of this parameter under various reflectances. We complement their work by providing a proof for the existence of this parameter for 
general \newtext{isotropic, band limited and single lobe reflectance kernels that explains and further validates their empirical findings. These band limited kernels are typically diffuse kernels with no specular component.}
In addition, we take a general hemisphere embedding technique approach that avoids the need to estimate any such reflectance-based parameter. Furthermore, we show that empirically our method achieves more accurate reconstructions on a variety of simulated and real experiments. 
%Also, as noted in \cite{IsomapInstability}, Isomap suffers from topological instability. We show XXX that this instability can have devastating effects on the resulting embedding. 
%As this assumption is not satisfied by many objects we find the equator points differently - as the points on the manifold's boundary.

Hemisphere and sphere embeddings have also been found useful in different contexts. \newtext{Dimensionality reduction into a (full) sphere was used in \cite{sLLE} for the problem of tomographic reconstruction from unknown viewing angles, and in \cite{littwin2015spherical} for the problem of spherical embeddings of silhouettes}. They restrict the Locally Linear Embedding \cite{LLE} into the sphere by formulating the problem as a hard to optimize quadratic program with non-convex cubic constraints. In contrast, our method embeds into the (hemi) sphere via solving an eigenvector problem and is guaranteed to find the correct embedding up to approximation errors without the risk of finding a local minimum. \newtext{Moreover, it does not suffer from challenging conditions such as attached-shadows and varying albedo.}

\section{Approach}
\label{sec:Our approach}
\goal{We want a mapping from intensities to normals}
Our approach views Photometric Stereo as a problem of metric embedding. We are given a set of input images \(I_{1}, I_{2}\dots I_{n}\), with $N$ pixels, of the same object seen from the same viewing direction but illuminated with different \newtext{directional light sources}. For each pixel $p$ we want to find its normal $n_p \in \HS$, where $\HS=\set{(x,y,z) \ | \ x^2+y^2+z^2=1, z\geq 0}$ denotes the unit hemisphere representing the set of forward facing surface normals (below we call this {\em the normal hemisphere}). These normals $\set{n_p}$ form a discrete sampling of the normal hemisphere $\HS$. For each pixel $p$ we denote by the vector $v_p$ the set of intensities observed at that pixel over the $n$ images.

\newtext{It has been shown in~\cite{SatoICCV2007,Sato2} that when the images are produced with single directional light sources whose direction is distributed uniformly over the unit sphere and whose intensity is constant}, normalized $\ell_2$ distances of \textit{nearby} intensity vectors approximate fairly well the corresponding spherical distances between the normals, namely
$$\|\hat v_p - \hat v_q\|_2 \approx d(n_p,n_q),$$ where \(\hat v_p=v_p/\norm{v_p}_2\) and \(\hat v_q=v_q/\norm{v_q}_2\). \no{Since the albedo of the surface is a multiplicative factor of the intensities, this normalization factors it out. Note that we use the notion of albedo to refer to the diffuse component of more complex reflectances. The albedo is an attribute of the object, not of the lighting, and can be changed from pixel to pixel. Note that since we approximate the spherical distance between nearby normals, the shapes that can be recovered by our approach are those with enough samples of normals from all the sides of the hemisphere.}
We extend the analysis done in \cite{SatoICCV2007} to reflectances produced by an \newtext{isotropic, band limited and single lobe} reflectance kernel. \newtext{This extends this result to images obtained with general light source configurations and to objects made of some non-Lambertian materials. With this setup we} prove the following claim: 
\begin{claim}{}\label{sh_isotropic_reflectance_proof_paper} 
Under uniformly distributed \newtext{directional} light sources and an isotropic, band limited \newtext{and single lobe} reflectance kernel, the $\ell_2$ distances of normalized intensity vectors $\hat v_p$ and $\hat v_q$ of points $p$ and $q$ with \textit{nearby} normals $n_p$ and $n_q$ are proportional to the spherical distance $d(n_p,n_q)$ between the corresponding surface normals, i.e.,

\begin{equation}  \label{eq:sph-dist}
\|\hat v_p - \hat v_q\|_2 = c \cdot d(n_p,n_q),
\end{equation}

where the constant $c$ depends on the reflectance kernel.
\end{claim}
\begin{proof}{}
See appendix.
\end{proof}

\newtext{
Our proof relies on a spherical harmonic decomposition of the reflectance function. \cite{RonenSH1,Ramamoorthi&Hanrahan2001} showed that images of Lambertian objects can be well approximated by a convolution of a low frequency kernel with the incoming lighting function. Formally, let $\ell(u): S^2 \rightarrow \Real$ represent incoming light intensity as a function of direction (a vector on the unit sphere $S^2$) then the image intensity $I(p)$ at a pixel $p$ with normal $n \in S^2$ and albedo $\rho$ can be expressed as
\begin{equation}
I(p) = \rho \int_{u \in S^2} k(u,n) \ell(u) du,
\end{equation}
where $k(u,n) = \max(u^T n, 0)$ is the {\em Lambertian kernel}. They further showed that $k$ acts as a low frequency kernel, and so all images of a Lambertian object, under arbitrary complex lightings, are well approximated by a spherical harmonic expansion with just 9 low-order terms. 

Our claim applies to Lambertian objects illuminated by arbitrary light sources when the set of light source directions over the entire set of images is distributed uniformly over the unit sphere. It further applies to non-Lambertian objects whose reflectance can be described with an isotropic, single-lobe band limited kernel , see e.g.~\cite{Nillius2004} for a list of such materials. Note that in either case it also accounts for attached (self) shadows.}

Our claim compliments the extensive empirical experiments conducted in \cite{Sato2} that show that a constant as in~\eqref{eq:sph-dist} empirically exists for a wide variety of reflectances. Unlike \cite{Sato2}, we do not need to estimate this parameter. Its affect on the hemisphere manifold structure is a uniform global scaling, effectively making it a smaller or a larger hemisphere. As we take a general hemisphere embedding approach, we are indifferent to this kind of uniform scaling. Nonetheless, we prove the following claim:
\begin{claim}{}\label{sh_isotropic_reflectance_proof_paper_Lambertian} 
The reflectance-dependent constant $c$ can be computed analytically for Lambertian reflectance, and is approximately~0.93.
\end{claim}
\begin{proof}{}
See appendix for proof. Intuitively, one might expect $c$ to equal 1 for Lambertian reflectance. Since Lambertian reflectance is non-smoothly clamped at zero \newtext{(recall that the Lambertian reflectance kernel, $k(u,n) = \max(u^T n, 0)$, is clamped at zero due to attached shadows}, intensity differences become smaller than they would have been for an unclamped Lambertian reflectance function. This effect brings the intensity distance of two pixels closer compared to their geodesic distance. The magnitude of this effect is analytically computed in our proof.

\end{proof}
Similarly to \cite{SatoICCV2007}, we define the neighborhood $\N_p=\set{q_1,...,q_k}$ of a pixel $p$ by taking the $k$-nearest neighbors w.r.t. the metric $\|\hat v_p - \hat v_q\|_2$. Locally, this neighborhood should be similar to the neighborhood of $n_p$ defined on the hemisphere using the spherical distances.

\renewcommand{\tableScaleSize}{.49}
\begin{figure}[t]
\center
\begin{tabular}{cc}
\includegraphics[width=\tableScaleSize\columnwidth]{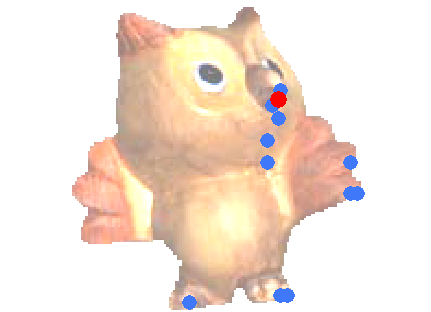}
&\includegraphics[width=\tableScaleSize\columnwidth]{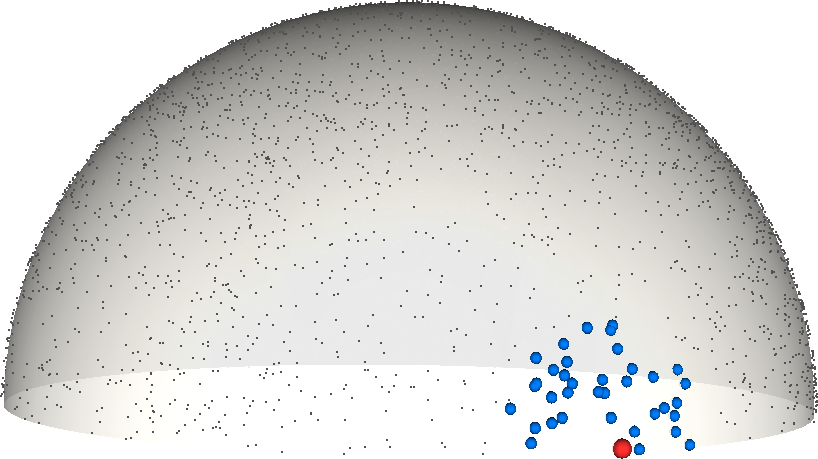}
\end{tabular}
\caption{\textbf{A neighborhood in the image and on the hemisphere: }Left: A pixel in the owl object marked in red and its neighborhood in blue. Right: The same pixel and neighborhood shown on the normal hemisphere. Far away pixels in the image can map to nearby points on the hemisphere and vice versa.}
\label{fig:NeighborhoodProcessFig}
\end{figure}

Figure~\ref{fig:NeighborhoodProcessFig} shows an example neighborhood of pixels. As can be seen, pixels with close normals may be spread far away in terms of image coordinates. Also, it may happen that many pixels in the image occupy a small area on the normal hemisphere.

This local pixel information, like pieces of a puzzle, needs to be combined in some form to obtain the global hemisphere structure. We present a novel hemisphere embedding approach to tackle this problem. The outline for our approach is as follows. First, we use local distances between points on the manifold to build an approximation of the Laplace-Beltrami (LB) differential operator $\Delta$ over the hemisphere $\HS$ while incorporating suitable boundary conditions. We then compute specific eigenfunctions of this operator. Finally, we show how to use these eigenfunctions to recover the unknown normals $n_p$. As motivation, we begin by considering our embedding based on the LB operator in the continuous case and later on, in Section~\ref{sec:algo}, describe our approach in the discrete case.

\goal{Continuous domain, have only local distances}
%We assume our image is defined over a continuous domain of pixels $p \in %\Omega$, and that their unknown normals $n_p$ cover the hemisphere $\HS$. %Our goal is to recover the normals $n_p$. Suppose that all we have at our %disposal are local spherical distances between pixel normals. That is, for %pixels $p,q$ with nearby normals $n_p,n_q$ we know the spherical distance %$d(n_p,n_q)$. We show how these local distances can be used to recover the %normals $n_p$.

\section{Continuous Laplace-Beltrami operator}
\goal{Outline of approach}
The LB operator is a differential operator that is \textit{local}. This means that although we do \textit{not} know yet where each $n_p$ is located on the hemisphere, knowing its neighbors and the spherical distances to them is enough to define the LB operator $\Delta$. Over the full unit sphere $\mathcal{S}=\set{(x,y,z) \ | \ x^2+y^2+z^2=1}$, the first four eigenfunctions (e.g., $\phi$ that satisfy $\Delta \phi=\lambda \phi$ with smallest value of $\lambda$ is the first eigenfunction), also known as spherical harmonics are (up-to normalizing constants): $\phi_0(n_p)=1$, $\phi_1(n_p)=n_p^x$, $\phi_2(n_p)=n_p^y$, and $\phi_3(n_p)=n_p^z$, \color{black}see \cite{hobson1931theory}\color{black}. If the pixels' normals $n_p$ lay on the full sphere, we could approximate the LB operator over the sampling $\set{n_p}$ of the sphere and take the second to fourth eigenfunctions, set them as coordinates, and get the solution up-to a global rotation and reflection. Since we are dealing with a hemisphere $\HS$, we have to set the conditions on the equator of the hemisphere $\partial\HS=\set{(x,y,z)\ \mid \ x^2+y^2=1, z=0}$ appropriately in order to retain the spherical harmonics as eigenfunctions.

\goal{Describe the LB operator, needing only local distances}
%The LB operator maps twice differentiable functions $\Phi(n_p)$ defined %over $\HS$ to functions $\Delta\Phi(n_p)$ also defined over the hemisphere. %The LB is \textit{local} in the sense that the value $\Delta\Phi(n_p)$ is %determined only by neighboring values $\Phi(n_q)$, where $n_q$ are near %$n_p$. Furthermore the LB operator is \textit{intrinsic} - it is defined only %in terms of the geodesic distance (i.e., spherical distance in this case) %$d(n_p,n_q)$. 

\goal{Usage of eigenfunctions}

\goal{Setting boundary conditions}
Let $\Delta_D$ be the LB operator, $\Delta$, with Dirichlet boundary condition. Setting these conditions by forcing $\phi\mid_{\partial \HS}=0$ and solving for eigenfunctions $\Delta_D \phi=\lambda \phi$ will produce spherical harmonics that vanish on the equator. In particular, the first eigenfunction would be the Z-coordinate.
Similarly, let $\Delta_N$ be the LB operator with Neumann boundary condition. Setting these conditions to force the normal derivative at the equator to vanish
will produce all eigenfunctions that have this property over the equator. We denote the resulting operator $\Delta_N$. In particular, the first eigenfunction will be the constant function, followed by the X and Y coordinates, \no{up to rotation and reflection}.
More generally, it can be shown that the set of spherical harmonics $Y_{nm}$, $n \ge 0$ and $-n \le m \le n$ can be split into two sets. The harmonics with odd $m+n$, which form the eigenfunctions of the Dirichlet LB operator and the harmonics with even $m+n$, which form the eigenfunctions of the Neumann LB operator.

\goal{Eigenfunctions to normals}
We calculate the first eigenfunction of $\Delta_D$ and call it $\wt{n}_p^{z}$. Similarly, we calculate the second and third (first two non-constant) eigenfunctions of $\Delta_N$ and denote them by $\wt{n}_p^x$ and $\wt{n}_p^y$, respectively. We now claim that up to a rotation about the Z-axis (possibly with a reflection in the XY-plane) $\wt{n}_p=(\wt{n}_p^x,\wt{n}_p^y,\wt{n}_p^z)=Tn_p$. We later show how this orthogonal transformation $T$ can be resolved using surface integrability constraints, \color{black}and due to the normalization that factors out albedo\color{black}.

%We incorporate Dirichlet boundary condition (forcing zero values on the equator of the hemisphere) to get the operator $\Delta_D$ and take its first eigenfunction, namely a function satisfying $\Delta_D \phi = \lambda \phi$, and call it $\wt{n}_p^{z}$. Similarly, we incorporate Neumann boundary conditions (forcing zero normal derivative on the equator of the hemisphere) getting the operator $\Delta_N$. In this case we take the second and third eigenvectors (the first two non-constant eigenvectors) of $\Delta_N$ and denote them by $\wt{n}_p^x$ and $\wt{n}_p^y$, respectively. We now claim that up to a rotation about the Z-axis (possibly with a reflection in the XY-plane) $\wt{n}_p=(\wt{n}_p^x,\wt{n}_p^y,\wt{n}_p^z)=n_p$. We later show how this orthogonal transformation can be resolved using surface integrability constraints.

\goal{conclude}
Using this continuous setting as motivation we develop an algorithm that approximates the normals of pixels by first constructing a discrete version of the LB operator and then using its eigenvectors to recover the normals. We proceed by elaborating on each part of our algorithm, the outline of which was provided above.

\section{Algorithm} \label{sec:algo}

Drawing motivation from the continuous case, we next present an algorithm to approximate the pixels' normals, and in turn the depth-map, from intensity vectors, see Alg.~\ref{alg:main}. The local distances in the point cloud $\set{\hat{v}_p}\subset\Real^n$ provide an approximation to the spherical distances between the corresponding (unknown) cloud of normal vectors $\set{n_p}$. Large distances, however, may deviate significantly from their corresponding spherical distances. We therefore think of this point cloud $\set{\hat{v}_p}$ as an (approximate) isometric (i.e., local length preserving) embedding of the normal hemisphere vectors $\set{n_p}\subset \HS$ into a higher ($n$-dimensional) Euclidean space. Our goal is to embed the point cloud $\set{\hat{v}_p}$ back into its natural domain $\HS$ by using only the local distances.

\subsection{Outline of the Photometric Stereo Algorithm}

The outline of our algorithm is as follows:

\begin{enumerate}
  \item Approximate local spherical distances and neighborhoods over the (unknown) cloud of normals $\set{n_p}\subset \HS$ using local distances extracted from $\set{\hat{v}_p}\subset \Real^n$. See Alg.~\ref{alg:2}.
  \item Use these \textit{local} spherical distances to build a global least-squares optimization problem whose solution is the discrete Laplace-Beltrami matrix $L \approx \Delta$ over the (unknown) cloud of normals $\set{n_p}$. See Alg.~\ref{alg:3}.
  \item Apply appropriate Dirichlet boundary conditions and Neumann boundary conditions to the global optimization problem and solve it to obtain the matrices $L_D\approx \Delta_D, L_N\approx \Delta_N$, respectively. See Alg.~\ref{alg:4}.
  \item Recover the approximated normals $\wt{n}_p=(\wt{n}^x_p,\wt{n}^y_p,\wt{n}^z_p)$ for every pixel $p$ by calculating the first eigenvector $\wt{n}^z=\set{\wt{n}^z_p}$ of $L_D$, and the two first non-constant eigenvectors of $L_N$, $\wt{n}^x$ and $\wt{n}^y$. Use the approximated normals to recover the depth map $z(x,y)$. See Alg.~\ref{alg:5}.
\end{enumerate}
Next we describe each part of this algorithm in detail.

%
%At the core of our algorithm is the construction of the discrete approximation $L$ to the Laplace-Beltrami (LB) operator $\Delta$ on the sample set $\set{n_p}\subset \HS$ of the normal %hemisphere manifold $\HS$.

%Our approximation has to take into account the fact that the distribution of normals on the hemisphere may not be uniform and that the number of sample points is finite.

\begin{algorithm}
\caption{$PhotometricStereo(I_1,...,I_n)$}
\label{alg:main}
\begin{algorithmic}
  \REQUIRE Image sequence $I_1,...,I_n$, each image is of the same size of $N$ pixels, and under different lightings.
    \STATE $\{\mathcal{N}_p\}_{p=1}^N,\{\hat v_p\}_{p=1}^N \gets Neighbourhoods(I_1,...,I_n)$
    \STATE $L \gets LaplacianApproximation(\{\mathcal{N}_p\}_{p=1}^N,\{\hat v_p\}_{p=1}^N)$
    \STATE $L_D,L_N \gets BoundaryConditions(L)$
    \STATE $z(x,y)\gets RecoverDepth(L_D,L_N)$
    \RETURN $z(x,y)$
\end{algorithmic}
\end{algorithm}

\subsection{Local distances and neighborhoods}

Since the LB operator is local, i.e., the value of the (continuous) LB operator at a point, $\Delta \Phi (n_p)$, is determined by values $\Phi(n_q)$ at (infinitesimally) close points $n_q$, we will only need approximations of spherical distances between nearby pixel's normals $n_p \approx n_q$. These will be taken as the $k$ nearest neighbors of $\hat{v}_p$ and then made symmetric by an AND relation. That is, two pixels are maintained as neighbors if they are neighbors of each other.
\no{Due to the reconstruction of local neighborhoods, the shapes that can be recovered by our approach are those with enough samples of normals from all the sides of the hemisphere. In particular, smooth shapes whose boundaries in the image correspond to occluding contours of the surface satisfy this condition.}

\begin{algorithm}
\caption{$Neighbourhoods(I_1,...,I_n)$}
\label{alg:2}
\begin{algorithmic}
  \REQUIRE Image sequence $I_1,...,I_n$, each image is of the same size of $N$ pixels, and under different lightings.
    \FOR {$\forall p = (x,y)$}
    \STATE $v_p \gets (I(x,y,1),...,I(x,y,n))$
    \STATE $\hat v_p \gets v_p/||v_p||_2$
    \ENDFOR 
    \STATE $\forall p,q: d_{p,q} \gets ||\hat v_p-\hat v_q||_2$
    \STATE $\forall p: \mathcal{N}_p \gets \{p,q_1,...,q_k\}$ 
    \COMMENT{Set the neighborhoods by computing the $k$ NN of $p$, according to the distances $d_{p,q}$}
    \FOR {$\forall p,q:$}
    \IF{$q\in \mathcal{N}_p \wedge p\notin \mathcal{N}_q$}
    \STATE remove $q$ from $\mathcal{N}_p$
    \ENDIF
    \ENDFOR
    \RETURN $\{\mathcal{N}_p\}_{p=1}^N,\{\hat v_p\}_{p=1}^N$
\end{algorithmic}
\end{algorithm}

\subsection{Discrete approximation of the Laplacian}

The approximation of the LB operator over a point cloud is commonly represented as a matrix $L$ of size $N \times N$ where $N$ denotes the number of pixels in each image. If $\phi=\set{\phi_p}_{p=1}^N$ is a vector representing a function $\Phi:\HS\too \Real$ sampled over the normals $\phi_p=\Phi(n_p)$,~ $L \phi$ is the approximation of the LB operator applied to $\Phi$, that is 
\begin{equation}
(L\phi)_p \approx \Delta \Phi (n_p),
\end{equation}
 where $(L\phi)_p$ is the $p^{th}$ coordinate of the vector $L\phi$.

At the core of our algorithm is the approximation of certain eigenfunctions of the Laplace-Beltrami operator over the point cloud of the (unknown) pixels' normals $\set{n_p}$. Since the point cloud of pixels' normals $\set{n_p}$ is an irregular sampling of the hemisphere $\HS$, constructing a reliable approximation to the LB operator over this point cloud is a key step in our algorithm.

Our approximation of the LB operator over the point cloud $\set{n_p}$ is motivated by the following observation (see, e.g., the lemma in \cite{arnold2004lecturesPDE}, p.107): in calculating the LB operator at some point $n_p$ on a sphere, one can replace the sphere by its tangent plane at $n_p$ and calculate the standard Euclidean Laplacian at $n_p$. Thus, in our construction, to approximate the LB at some point $n_p$ we reconstruct locally a small patch of the sphere in a neighborhood $\N_p$ and parameterize it over its tangent plane. We then approximate the Euclidean planar Laplacian over that plane. More specifically, if we denote by $(u,v)$ the coordinates of the tangent plane at $n_p$, then our goal is to find a set of scalar weights $L_{p,q}$, $q\in \N_p$, such that
\begin{equation}\label{e:tp_laplacian_approx}
  \sum_{q\in \N_p} L_{p,q} \phi_q \approx \Phi_{uu}(n_p) + \Phi_{vv}(n_p) = \Delta \Phi(n_p),
\end{equation}
where, as before, $\phi_p=\Phi(n_p)$ is a sampling vector of some smooth function over our hemisphere $\HS$.

Next we describe how to find the local tangent plane approximation at $n_p$, and how to build the weights $L_{p,q}$ to approximate the planar Laplacian over that tangent plane.

\begin{algorithm}
\caption{$LaplacianApproximation(\{\mathcal{N}_p\}_{p=1}^N,\{\hat v_p\}_{p=1}^N)$}
\label{alg:3}
\begin{algorithmic}
  \REQUIRE Neighborhood of each pixel $\mathcal{N}_p$, and normalized intensities vector of each pixel $\hat v_p$.
  
  \FOR {$\forall p = (x,y)$}
  \STATE $\forall q \in \mathcal{N}_p: (n_q^u,n_q^v) \gets PCA_{n\rightarrow 2}(\hat v_q)$ \COMMENT{reduce dimension of $\hat v_q$ from $n$ to $2$ by applying PCA on every $\hat v_{q}$ of $q\in \mathcal{N}_p$ (parameterization over tangent plane)}
  
  \STATE $r_p \gets max_{q\in \mathcal{N}_p}\sqrt{(n_q^u-n_p^u)^2+(n_q^v-n_p^v)^2}$
   
   \IF {$q \notin \mathcal{N}_p$}
   \STATE $w_{p,q} = 0$
   \ENDIF
   \ENDFOR
   \STATE Compute weights $w_{p,q}$ by solving:
   \begin{alignat}{3}
   \min \quad        & \sum_{p,q\in \N_p} w_{p,q}^2 & \nonumber \\
   \text{s.t } \quad & \sum_{q\in \N_p} w_{p,q} (n_q^u - n_p^u) = 0 \quad & \forall{p} \label{eq:weights:2} \\
               \quad & \sum_{q \in \N_p} w_{p,q} (n_q^v - n_p^v) = 0 \quad & \forall{p} \label{eq:weights:3} \\
               \quad &  \sum_{q\in \N_p} w_{p,q} = \frac{1}{r_p^2} \quad & \forall{p} \label{eq:weights:4} \\
               \quad & w_{p,q} \geq 0 & \forall{p,q\in \N_p} \label{eq:weights:5} \\
               \quad & w_{p,q} = w_{q,p} & \forall{p,q\in \N_p} \label{eq:weights:6}            
   \end{alignat}
   \STATE $W\in R^{N\times N} \gets (w_{p,q})$ \COMMENT{$W$ is the matrix whose entries are the weights $w_{p,q}$}
   \STATE $D\in R^{N\times N} \gets diagonal\{d_{11}=\sum_q w_{1,q},...,d_{NN}=\sum_q w_{N,q}\}$
   \STATE $L\gets D-W$ \COMMENT {Discrete laplacian matrix}
   \RETURN L
\end{algorithmic}
\end{algorithm}

\paragraph{Local parameterization over tangent planes}

For each $p$, we would like to build a local tangent plane parameterization to $\HS$ at $n_p$. We will construct this parameterization using the local neighborhood $\N_p$. We project the normalized intensities $\hat{v}_p$ of $\N_p$ onto a two-dimensional plane using Principal Component Analysis (PCA). The coordinates of the projected normals approximate the coordinates on the tangent plane to $\HS$.

\paragraph{LB weight construction.}

Our next objective is to use our planar parameterization to construct the discrete Laplacian operator, $L$. We ignore boundary conditions for now and treat them separately later. Each pixel $p$ is associated with a row in $L$, and the entries of that row represent weights $L_{p,q}, q\in \N_p$, so that a combination of function values $\Phi(n_q)$ with those weights approximates $\Delta \Phi(n_p)$ as described in equation (\ref{e:tp_laplacian_approx}). Our task therefore is to find those weights. Clearly, the weights should vary from one row to the other, since the neighborhoods for each normal $n_p$ can vary. Below we describe a method to determine these sets of weights.
\iffalse
Let us denote by $(n_p^u,n_p^v)$ the two coordinates of the approximated normal $n_p$ in the coordinates of the PCA plane (approximating the tangent plane coordinates). Similarly we denote $(n_q^u, n_q^v)$ for each $q\in\N_p$. We define the radius $r_p$ of the neighborhood $\N_p$ as the maximum Euclidean distance of a neighbor $(n_q^u,n_q^v)$ from the point $(n_p^u,n_p^v)$. As mentioned above it is enough to approximate the Euclidean planar Laplacian in $u$ and $v$. \newtext{To that end, we first compute a symmetric non-negative matrix $W$ by solving a linearly constrained least-squares problem, see Alg.\ref{alg:3}. Each row of $W$ corresponds to a pixel $p$ and contains the weights $w_{p,q}$. Weights of pixels that are not neighbors of each other are set to zero. We next define the discrete Laplacian as $L=D-W$, where $D$ is a diagonal matrix with $d_{pp}=\sum_q w_{p,q}=\frac{1}{r_p^2}$. The justification to this definition of the Laplacian stems from  the fact that $L$ eliminates linear polynomials (i.e., $1,u,v$ and their linear combinations). Therefore, as shown in \cite{LaplacianEigenmaps}, assuming uniform distribution of directions $n_q-n_p$, the expectation of $(L\phi)_p=d_{pp}\phi_p-\sum_q w_{p,q}\phi_q$ approximates $\alpha\Delta \phi(p)$, where $\alpha=E\ip{(n^u_q-n^u_p, n^v_q-n^v_p),e_}^2$, and $e$ is an arbitrary unit vector. We used $1/r_p^2$ to approximately eliminate this factor. } This approximation still misses the incorporation of the boundary conditions. We next show how to incorporate the Dirichlet and Neumann boundary conditions.
 \fi

Let us note that our construction of the weights for a single pixel is equivalent to computing Locally Linear Embedding (LLE)~\cite{LLE} weights in the case that the $w_{p,q}$'s are under-determined (number of neighbors 
is greater than $d+1$, where $d$ is the dimension of the points, where in our case $d=2$) while taking the limit of their conditioning parameter to zero. The main 
difference is that our formulation maintains linear precision always  by reproducing the polynomials $1,u,v$ and their linear combinations (equations \ref{eq:weights:2}, \ref{eq:weights:3}, \ref{eq:weights:4}), uses non-negative weights (equation \ref{eq:weights:5}), minimizes the solution's norm, and is global and symmetric (equation \ref{eq:weights:6}). In terms of weights, locally finding the weights for one neighborhood $\N_p$ is the same as using the pseudoinverse in the LLE.

%To obtain a good approximation of the continuous Laplacian, we constrain %the weights to be non-negative and symmetric in order for the resulting %discrete Laplacian to be positive semidefinite \cite{Wardetzky07discretelaplace}. %Weights of pixels that are not neighbors of each other are set to zero. %
%We use these weights to construct a matrix \(W\). Each row in \(W\) corresponds %to a pixel $p$ and contains the weights $w_{p,q}$. 
%
%We define the discrete Laplacian as $L=D-W$, where $D$ is a diagonal matrix with $d_{ii}=\sum_q w_{i,q}$. Note that, similarly to the Euclidean Laplacian, the $p$'{th} row of $L$ eliminates linear polynomials (i.e., $1,u,v$ and their linear combinations) in the tangent plane at $n_p$. This property can now be used to justify the approximation $L\approx \alpha \parr{ \partial_{uu}+\partial_{vv} }= \alpha\Delta$, where $\alpha$ is some scaling constant (see \cite{LaplacianEigenmaps} for details). 
%
%This approximation still misses the incorporation of the boundary conditions. We next show how to incorporate the Dirichlet and Neumann boundary conditions.

\subsection{Boundary conditions}
We incorporate both Dirichlet and Neumann boundary conditions along the hemisphere's equator $\partial\HS$. \newtext{Both conditions must be used to solve the embedding problem}. Later on in this section we further explain how we identify points on the equator.

\paragraph{Dirichlet boundary condition.}
For the Dirichlet boundary condition, we only need to calculate weights for interior points $\HS\setminus\partial\HS$. We keep only rows of $L$ corresponding to interior points (not on the equator of the normal hemisphere).
For every equator point \(p\), we replace the corresponding row in $L$ with the standard basis vector that has some value \(t \ne 0\) at position \(p\) and \(0\)'s elsewhere. We denote the new matrix $L_D$. If the eigenvector in the equation $L_D \phi = \lambda \phi$ satisfies $\lambda\ne t$ then necessarily $\phi(p)=0$. In the generic case taking arbitrary $t$ will lead to the desired eigenvectors, and we take \(t=1\). Let us denote by $\wt{n}^z=\set{\wt{n}^z_p}$ the eigenvector of $L_D$ corresponding to the smallest eigenvalue. To enforce this boundary condition we add the following constraints to constraints (\ref{eq:weights:2})-(\ref{eq:weights:6}) in Alg.~\ref{alg:3}.
\begin{alignat}{3}
    \quad & w_{p,q} = 0 \quad & \forall{p \in \partial\HS, \quad p \neq q} \label{eq:Dirichlet:1} \\
    \quad & w_{p,q} = 1 \quad & \forall{p \in \partial\HS, \quad p = q} \label{eq:Dirichlet:2}
\end{alignat}

\renewcommand{\tableScaleSize}{.49}
\begin{figure}[t]
\begin{tabular}{cc}
\includegraphics[width=\tableScaleSize\columnwidth]{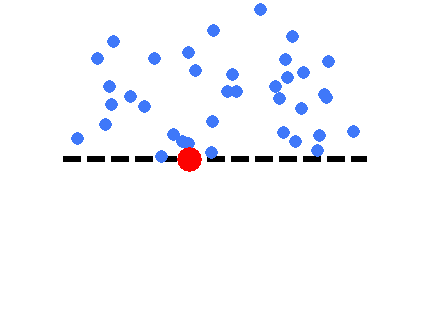}
&\includegraphics[width=\tableScaleSize\columnwidth]{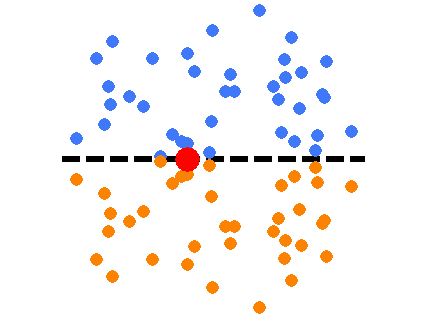}
\end{tabular}
\caption{\textbf{Enforcing Neumann boundary conditions for a pixel:} Left: A pixel on the equator (red) and its local neighborhood (blue) after projection onto 2d. Right: Mirroring of points to enforce Neumann boundary conditions. Since each mirrored pixel attains the value of its original pixel, the derivative in the direction perpendicular to the equator is zero.}

\label{fig:NeumannProcessFig}
\end{figure}

\paragraph{Neumann boundary condition.}
To get Neumann boundary conditions we use symmetric reflection along the domain's boundary. As before, we do not alter the rows of $L$ corresponding to interior points. For an equator pixel $p$, we duplicate its neighborhood $(n_q^u,n_q^v)$, $q\in\N_p$ along the equator via reflection. To get the direction of the equator we use the eigenvector $\wt{n}^z$ calculated with the Dirichlet boundary condition. Since this vector approximates the $z$-coordinate value of the normals its gradient is orthogonal to the boundary (equator). We therefore calculate its approximated gradient (using a linear fit) and rotate it by $\pi/2$ to recover the axis of reflection. We then reflect all the neighbor points $(n_q^u,n_q^v)$, \(q\in\N_p\) in the PCA plane along the equator line we have found, to create corresponding points $q^R$ such that the equator point $(n_p^u,n_p^v)$ lies exactly on the equator line. 
To enforce this boundary condition we add the following constraints to constraints (\ref{eq:weights:2})-(\ref{eq:weights:6}) in Alg.~\ref{alg:3}:
\begin{alignat}{3}
    \quad & w_{p,q^R} = w_{p,q} \quad & \forall{p\in \partial\HS,\quad q, q^R\in \N_p}. \label{eq:Neumann:1}
\end{alignat}

We denote this operator by $L_N$. Note that reflection will force eigenvectors $L_N \phi = \lambda \phi$ to have zero normal derivative across the equator, since we force symmetry w.r.t. the equator. In the kernel of $L_N$ we will have the constant vector. The first two eigenvectors corresponding to non-zero eigenvalues are denoted by $\wt{n}^x,\wt{n}^y$.

\paragraph{Identifying equator points.}

In order to set our boundary conditions, we need to identify the pixels that lie on the equator of the normal hemisphere. If the object we attempt to reconstruct is smooth and convex this set of pixels will include the points on the bounding contour of the silhouette. This assumption has been used by~\cite{SatoICCV2007}. As \cite{Sato2} note, this may not be the case, as is demonstrated in Figure~\ref{fig:BoundaryDiscovery}. To address this issue we introduce a method for identifying the equator that is based on the structure of the manifold itself.

We identify points on the equator by applying the Isomap algorithm to embed the points onto 2 dimensions. The Isomap algorithm uses geodesic distances calculated as the shortest paths between each pair of points. On real images, which exhibit noisy distances, we found that sometimes the embedding projects the hemisphere onto other planes such as the XZ or YZ planes instead of the XY plane. We found it helpful to modify the distances to correspond to a more flattened disc-shaped manifold, to ensure the embedding projects the hemisphere onto the XY. This modification is a local non-linear transformation of the distances $d_{p,q}$ that enlarges large distances: $\tan(\frac{d_{p,q}\pi}{2d_{max}+\epsilon})$. We note that the boundary of the manifold is invariant to this distance transformation.
The resulting planar embedding's boundary corresponds to the boundary of the original manifold. We then select the points on the convex hull, label them as boundary points, remove them from the set, and apply convex hull again. We repeat this until 5\% of points are selected as boundary.
Figure~\ref{fig:BoundaryDiscovery} shows the equator discovered by our method (left). It can be seen that our result closely matches the real equator, whereas the equator from the silhouette  of the object (right) includes many more internal points.

\renewcommand{\tableScaleSize}{3cm}
\begin{figure}[h!]
\center
\includegraphics[height=\tableScaleSize]{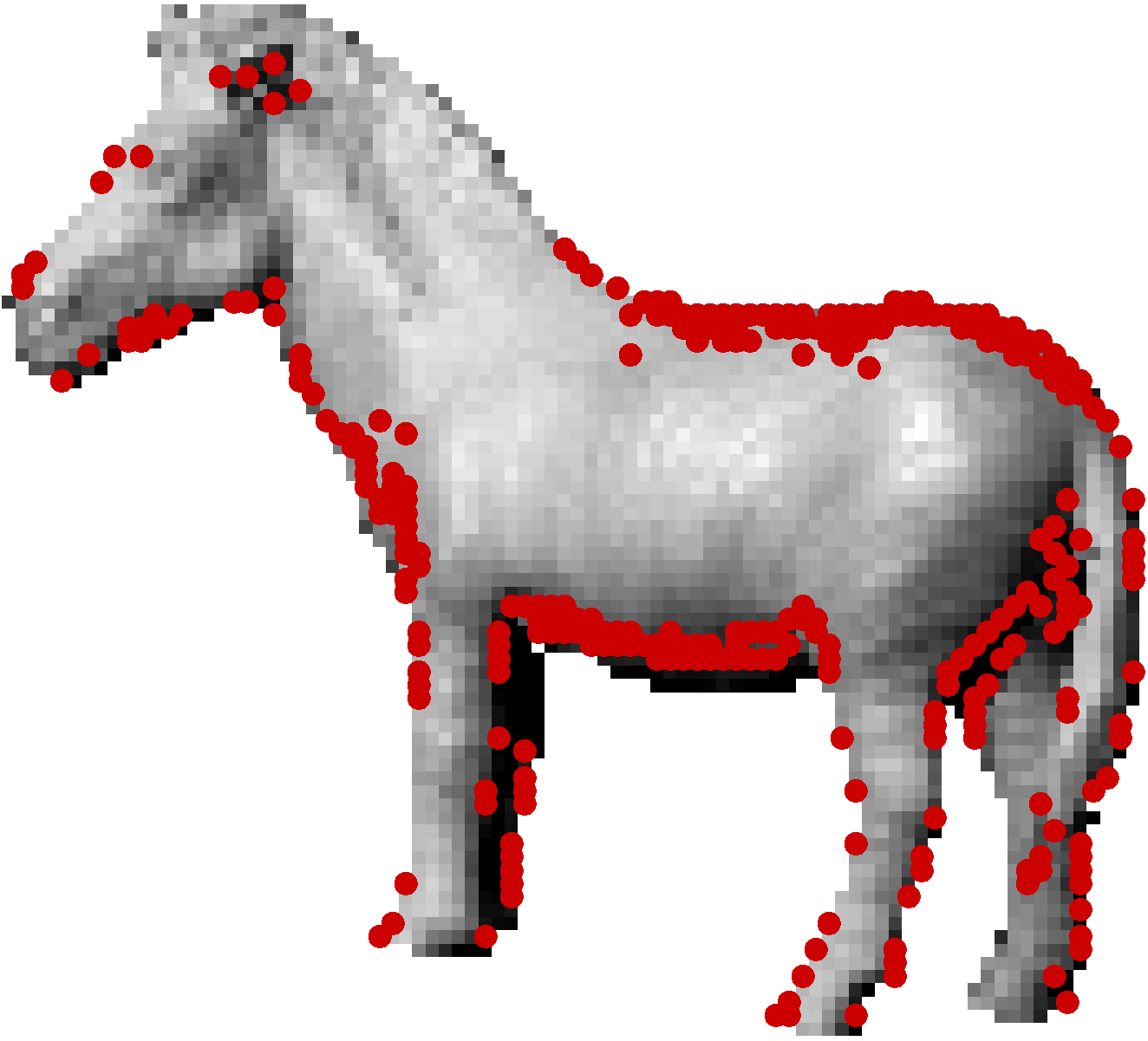}
\includegraphics[height=\tableScaleSize]{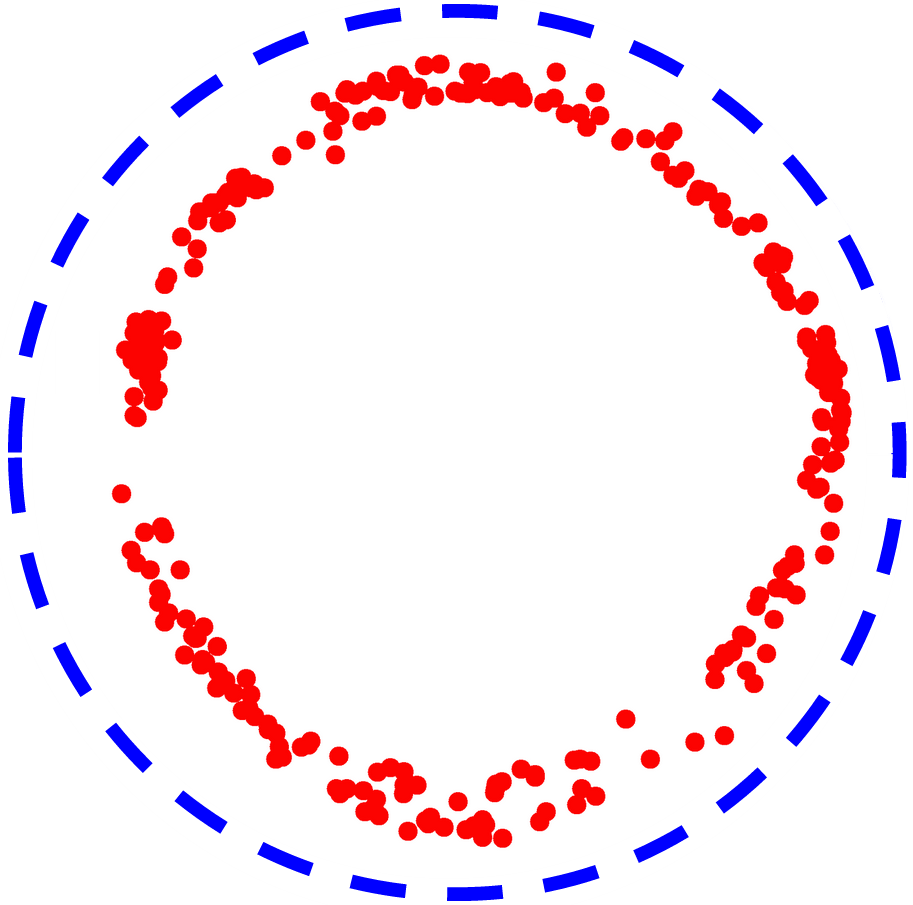}
\vline
\includegraphics[height=\tableScaleSize]{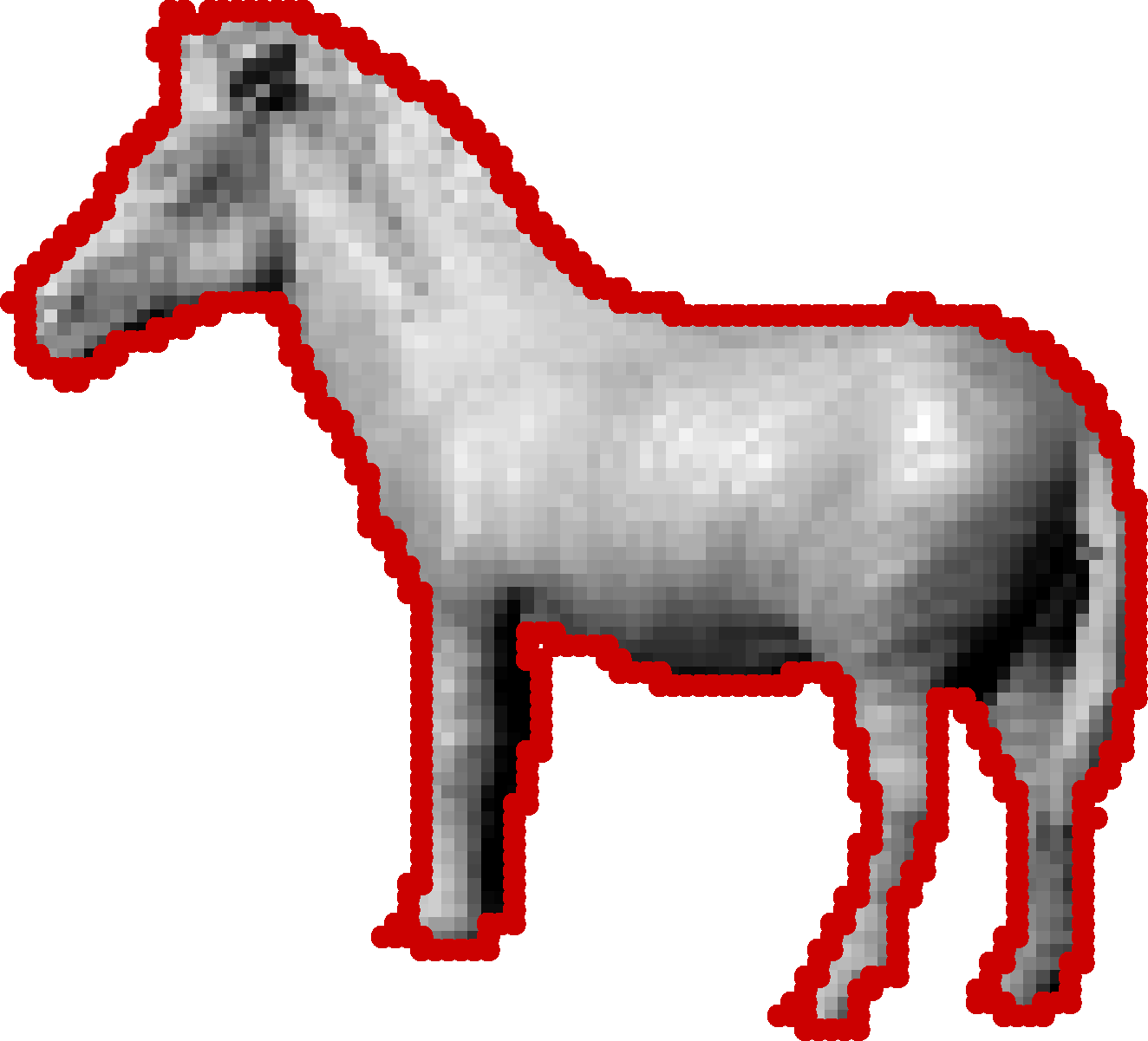}
\includegraphics[height=\tableScaleSize]{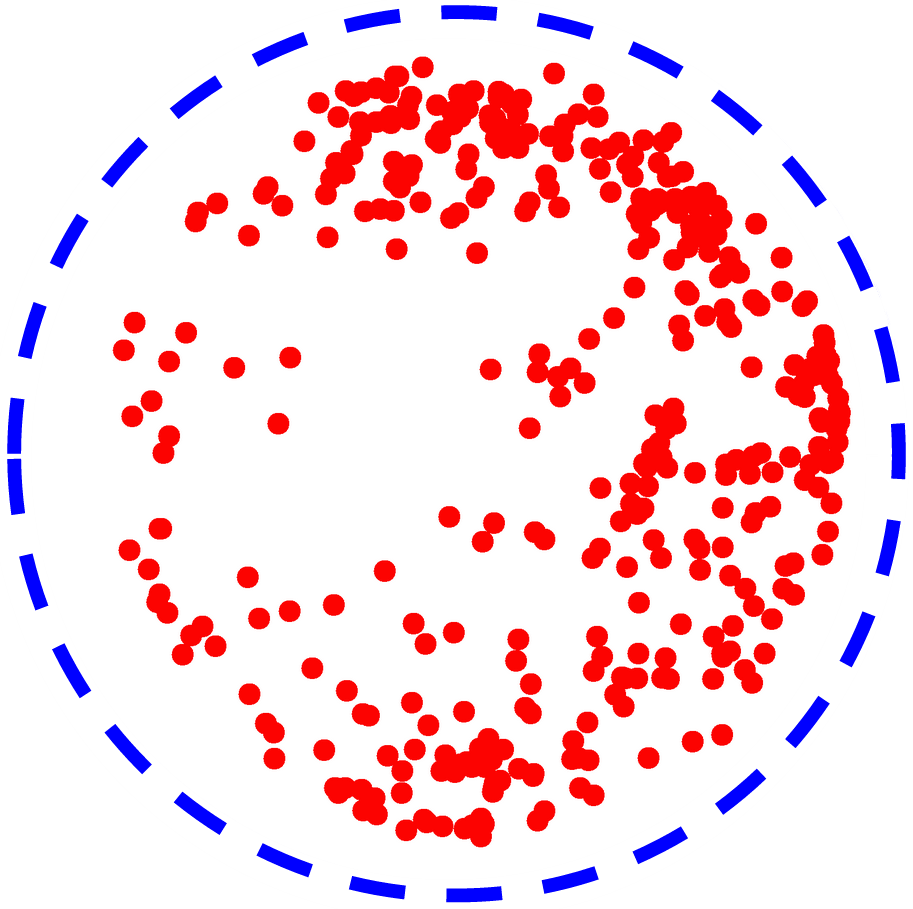}
\caption{\textbf{Comparing equator discovery methods.} Top: Our method. Bottom: Silhouette-based equator discovery. Discovered equator points are marked in red on the image and the normal hemisphere. Observe that silhouette outline pixels (bottom) are scattered over the whole hemisphere. Points discovered by our method (top) more closely match the true equator.}
\label{fig:BoundaryDiscovery}
\end{figure}

We note that the Dirichlet boundary conditions enforce $\wt{n}^z$ to be zero on the equator. A camera cannot actually see pixels whose normal is exactly on the equator since their surface orientation is perpendicular to the viewing direction.  To correct for this, we have shifted the $\wt{n}^z$ values such that the minimum value is set to 0.15.

\begin{algorithm}
\caption{$BoundaryConditions(L)$}
\label{alg:4}
\begin{algorithmic}
  \REQUIRE $L$ is the discrete laplacian matrix of size $N\times N$.
    \STATE Identify the boundary points on the equator
    \STATE $L_D\gets$ apply Dirichlet boundary condition on $L$ 
    \COMMENT{value equals zero} 
    \STATE $L_N\gets$ apply Neumann boundary condition on $L$ 
    \COMMENT{derivative along the $z$ axis equals zero} 
    \RETURN $L_D,L_N$
\end{algorithmic}
\end{algorithm}

\subsection{Reconstructing normals and depth map}%\vspace{-0.1cm}

The last stage in our algorithm is getting the approximated normals $\wt{n}_p$ and using them to build a depth map. This is achieved by computing the $\wt{n}^x, \wt{n}^y, \wt{n}^z$ eigenvectors defined above and setting $\wt{n}_p=(\wt{n}_p^x,\wt{n}_p^y,\wt{n}_p^z)$.
As the eigenvectors are found up to scale, we normalize $\wt{n}_p$ to unit length. Next, note that since the two eigenvectors $\wt{n}^x,\wt{n}^y$ correspond to the same eigenvalue, they are subject to arbitrary rotation and reflection. The rotation and reflection are resolved using surface integrability constraints based on~\cite{YuilleIntegrability}. Recovering the shape for a Lambertian object can be done up to a generalized-bas-relief (GBR) transformation, which is a 3 parameter linear transformation of the normals and a non-linear transformation of albedos \cite{BasReliefAmbiguity}. Since we normalize the intensity vectors to factor out albedo, our construction is independent of albedo and the GBR ambiguity is thus restricted to a convex-concave ambiguity. To reconstruct a depth map we apply integrability constraints as in \cite{RonenSH2}, \newtext{for other methods see survey in \cite{durou2016}}. The convex-concave ambiguity is resolved using a global assumption on depth as in \cite{LDR}.

\paragraph{Comparing recovered and expected eigenvalues.}
We know the geometry of the desired manifold is that of a hemisphere. We can therefore compare the recovered eigenvalues of our approximation $L$ to the expected spherical harmonic eigenvalues of $\Delta$ (which are analytically known). If the recovered eigenvalues do not fit the expected eigenvalues, it suggests that our reconstruction will not be very precise. It can be shown that the set of spherical harmonics $Y_{nm}$ can be divided into two sets of functions, $Y_{nm}$ with odd $n+m$, which vanish along the equator, corresponding to the Ditichlet boundary conditions, and $Y_{nm}$ with even $n+m$, whose longitudinal derivatives vanish along the equator, corresponding to the Neumann boundary conditions.

In Figure~\ref{fig:LaplacianEigenvalues-Neumann} we show a comparison between the recovered and expected eigenvalues of our approximation $L$ constructed for the owl object (shown later in Figure~\ref{fig:LambertianNonUniform}) with Dirichlet or Neumann boundary conditions using only the varying illumination images. The figure shows that the 8 smallest eigenvalues computed from $L$ are fairly close to the expected Spherical Harmonic eigenvalues. As noted above, $L\approx \alpha \Delta$, with some constant $\alpha>0$. Therefore, the comparison in Figure~\ref{fig:LaplacianEigenvalues-Neumann} is done after compensating for this scale by fitting a global scale factor to the eigenvalues. Note that multiplying an operator \(L\) by a factor \(\alpha\) scales the eigenvalues but does not alter the eigenvectors.

\renewcommand{\tableScaleSize}{1}
\begin{figure}[h!]
\center
\includegraphics[width=\tableScaleSize\columnwidth]{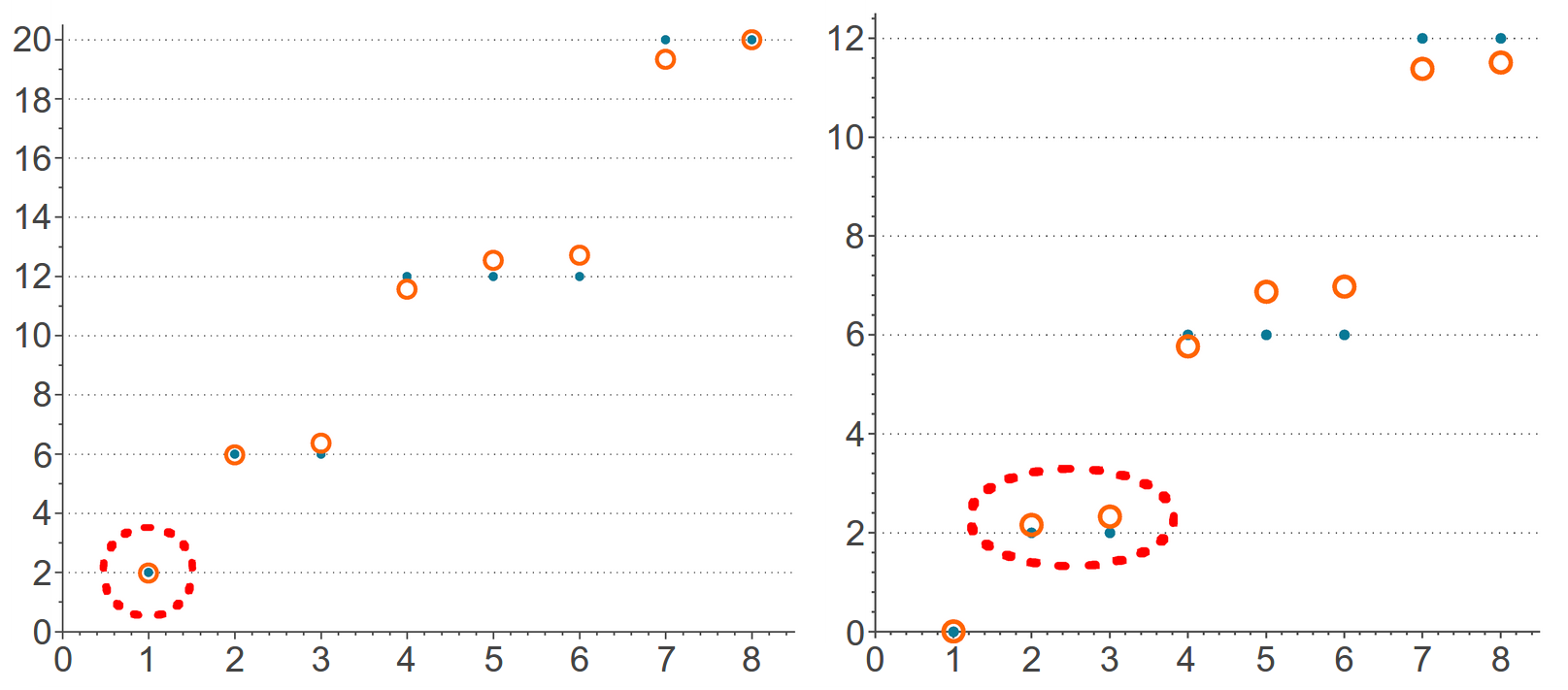}
\caption{\textbf{Comparing recovered and expected eigenvalues}: The expected (blue dots) and approximated (orange circles) scaled eigenvalues of images of the rough ball object for the LB operator with Dirichlet (left) and Neumann (right) eigenvalues. For reconstruction we only use the eigenvectors whose corresponding eigenvalues are marked in red.}
\label{fig:LaplacianEigenvalues-Neumann} \vspace{-0.3cm}
\end{figure}

\begin{algorithm}
\caption{$RecoverDepth(L_D,L_N)$}
\label{alg:5}
\begin{algorithmic}
  \REQUIRE $L_D$ is laplacian matrix with Dirichlet boundary conditions, $L_N$ is the same matrix with Neumann boundary conditions.
        \STATE $\wt{n}^z_p \gets$ first eigenvector of $L_D$
        \STATE $\wt{n}^x_p,\wt{n}^y_p \gets$ second and third eigenvectors of $L_N$
    \STATE rotate $\wt{n}^x_p,\wt{n}^y_p$ to the image axes
    \STATE $z(x,y)\gets depthImage(\wt{n}^x_p,\wt{n}^y_p,\wt{n}^z_p)$
    \RETURN $z(x,y)$
\end{algorithmic}
\end{algorithm}

\section{Experimental results}
We evaluate our method as a general embedding method for a hemisphere manifold and compare it to the following existing embedding methods: Isomap, a modified version of Isomap, Locally Linear Embedding (LLE)~\cite{LLE} and Diffusion maps.
The modified version of Isomap improves its performance for embedding a hemisphere. Isomap embeds a manifold of intrinsic dimension 2 onto 3d by calculating geodesic distances on the manifold,
but then embedding them using multi-dimensional scaling (MDS), which expects these distances to be Euclidean. Since we know the manifold is that of a hemisphere,
we can convert these geodesic distances $d_g$ to Euclidean distances $d_e$ using the following formula, derived from the law of cosines:

\begin{equation}
d_e = \sqrt{2 r^2 (1-\cos(d_g))}
\end{equation}

where $d_g$ is the geodesic distance and where we set the radius $r$ according to the following,

\begin{equation}
r=\frac{d_{max}}{\pi}
\end{equation}

where $d_{max}$ is the longest geodesic distance between any two points. We refer to this modified Isomap based on the hemispherical chordal distances as 'Isomap chordal'.
In this experiment we use 3 models: a sphere, a doll and a rabbit. For each model, we generate 90 images, each lit by a randomly positioned point source.
We repeat the experiment 30 times and show the results in Table~\ref{table:EmbeddingComparison_mean_l2_error}.
In this experiment, the distances are not ideal due to the randomness in the light distribution, which together with the gap between the intrinsic dimension of the hemisphere and the target dimension of the embedding, are known to cause noise
and instability in the LLE and Diffusion maps embeddings \cite{IntrinsicDimensionEstimation}. 
Isomap chordal achieves favorable results to those of Isomap, which suggests that prior knowledge of the manifold shape can significantly improve embedding results.
Unlike Isomap chordal, we do not rely on successful estimation of a global radius parameter $r$ and do not suffer from topological instability \cite{IsomapTopologicalInstability}.
We further add that in this experiment, we have used Procrustes superimposition to optimally fit the resulting embedding onto the unit hemisphere.
In Photometric Stereo this optimal alignment is not available to us. We now conduct a similar experiment to evaluate the different embedding methods for the purpose of Photometric Stereo.
We use the method in \cite{SatoICCV2007} based on the occluding boundary to align the embedded hemispheres for the existing methods on the unit hemisphere and generate surface normals.
Table~\ref{table:EmbeddingComparison_meanAngleError} shows the resulting errors in terms of mean angle error of the normals.

\renewcommand{\tabcolsep}{0.1cm}
\begin{table}[htbp]
  \centering
  \caption{\textbf{Various objects, 90 images, random single point source}: The table shows the mean point pair distance after optimal Procrustes superimposition to the unit hemisphere.}
  \begin{tabular}{rrccccc}
Model & \multicolumn{1}{l|}{\textbf{Error}} & \specialcell{Our\\method} & Isomap & \specialcell{Isomap\\chordal} & LLE & \specialcell{Diffusion\\Map}\\
\hline
				&	\multicolumn{1}{l|}{Mean}&	\textbf{0.06}&	0.1&	0.08&	0.32&	0.3\\
\multicolumn{1}{l}{Sphere}	&	\multicolumn{1}{l|}{Median}&	\textbf{0.06}&	0.1&	0.08&	0.32&	0.3\\
				&	\multicolumn{1}{l|}{Std}&	0.006&	0.008&	0.009&	0.054&	0.04\\
\hline
				&	\multicolumn{1}{l|}{Mean}&	\textbf{0.08}&	0.12&	0.09&	0.26&	0.24\\
\multicolumn{1}{l}{Rabbit}	&	\multicolumn{1}{l|}{Median}&	\textbf{0.08}&	0.12&	0.09&	0.22&	0.22\\
				&	\multicolumn{1}{l|}{Std}&	0.006&	0.008&	0.008&	0.107&	0.058\\
\hline
				&	\multicolumn{1}{l|}{Mean}&	\textbf{0.08}&	0.13&	0.1&	0.25&	0.17\\
\multicolumn{1}{l}{Doll}	&	\multicolumn{1}{l|}{Median}&	\textbf{0.07}&	0.13&	0.1&	0.25&	0.17\\
				&	\multicolumn{1}{l|}{Std}&	0.008&	0.007&	0.006&	0.046&	0.027\\

\end{tabular}%
  \label{table:EmbeddingComparison_mean_l2_error}%
\end{table}%

\renewcommand{\tabcolsep}{0.1cm}
\begin{table}[htbp]
  \centering
  \caption{\textbf{Various objects, 90 images, random single point source}: The table shows the mean normal angle error.}
  \begin{tabular}{rrccccc}
Model & \multicolumn{1}{l|}{\textbf{Error}} & \specialcell{Our\\method} & Isomap & \specialcell{Isomap\\chordal} & LLE & \specialcell{Diffusion\\Map}\\
\hline
				&	\multicolumn{1}{l|}{Mean}&	\textbf{11.95}&	17.43&	13.2&	37.21&	28.73\\
\multicolumn{1}{l}{Sphere}	&	\multicolumn{1}{l|}{Median}&	\textbf{11.69}&	17.48&	13.0&	39.83&	28.23\\
				&	\multicolumn{1}{l|}{Std}&	1.457&	0.902&	1.064&	11.072&	6.996\\
\hline
				&	\multicolumn{1}{l|}{Mean}&	\textbf{11.16}&	18.26&	14.82&	27.08&	23.35\\
\multicolumn{1}{l}{Rabbit}	&	\multicolumn{1}{l|}{Median}&	\textbf{10.47}&	18.32&	14.91&	26.47&	20.59\\
				&	\multicolumn{1}{l|}{Std}&	3.662&	1.027&	1.138&	11.201&	9.18\\
\hline
				&	\multicolumn{1}{l|}{Mean}&	\textbf{13.42}&	26.44&	23.04&	36.79&	23.1\\
\multicolumn{1}{l}{Doll}	&	\multicolumn{1}{l|}{Median}&	\textbf{13.25}&	26.45&	23.08&	41.64&	23.8\\
				&	\multicolumn{1}{l|}{Std}&	2.449&	2.193&	2.144&	11.12&	2.683\\

\end{tabular}%
  \label{table:EmbeddingComparison_meanAngleError}%
\end{table}%

To test the performance of the proposed algorithm on real images, we have conducted experiments with a variety of materials and lighting conditions. In all experiments the images were resized to 110 pixels in width (and 100 to 170 pixels in height) for faster computation times. On these images our algorithm takes between 5 to 20 minutes on a quad-core 2.8GHz PC, where the time is mainly dominated by the number of pixels and considerably less by the number of input images. We note that our code is implemented in Matlab and can be further optimized, especially the per-pixel computations which can be massively parallelized.

In all experiments we set the number of neighbors \(k\) to 5\% of pixels. We identify outlier pixels by removing pixels that are not favored by the neighbors (a fraction of their neighbors did not choose them as neighbors, using 80\% as a threshold) as we anticipate such pixels to be corrupt or subject to significant noise. We remove these pixels from the computation, and then set their recovered normals by interpolation from neighboring pixels in the image. Instead of using the radius $r$ in equation (\ref{eq:weights:4}) the constant value of 1 was used, as it was found to produce more stable results in practice.

Where appropriate, we compare our results to those of Sato et al.~\cite{Sato2} \color{black} and to Favaro and Papadhimitri \cite{LDR}\color{black}. The code of both methods was obtained from the authors. As an evaluation error measure we use the mean degree error of the normals.

We begin by showing results on objects lit by uniformly distributed point sources. We use the benchmark of~\cite{ConsensusPS} which consists of 4 objects: a terracotta warrior, a doll and a relief sculpture with Lambertian reflectance and a rough non-Lambertian ball. The data set includes between 46 and 57 images per object. In lack of ground truth, we used the normals recovered from their calibrated PS method as ground truth.  Figure~\ref{fig:MixedReflectanceUniform} and Table~\ref{table:MixedReflectanceUniform} show the results of this experiment. This dataset consists of many images which exhibit significant attached shadows and therefore demonstrates the robustness of our method to this phenomena. We note that the relief sculpture exhibits significant cast shadows, which are not modeled by our method, and can account for the higher error measurements for both our method and~\cite{Sato2}.

In the next experiment, the results of which are shown in Figure~\ref{fig:LambertianNonUniform} and Table~\ref{table:LambertianNonUniform}, we use the benchmark of~\cite{LDR}. The data set includes 7 objects and 5 to 12 images per object obtained with point source lightings. Lighting directions are close to the viewing directions with a mean angle of \(23^{\circ}\) and standard deviation of \(10^{\circ}\). We compare our reconstruction to `ground truth' shapes produced in~\cite{LDR} by applying a calibrated Photometric Stereo algorithm that assumes Lambertian reflectance to the images, which are available with the dataset. Lights in this experiment cover only frontal directions, which can account for higher errors compared to the previous data set. \cite{Sato2} assume uniformly distributed point source lights and estimate an essential parameter based on this assumption. This experiment does not adhere to their assumptions and we have not been fruitful in applying their method to these settings. 

Finally, in Figure~\ref{fig:NonLambertianAmbient} and Table~\ref{table:NonLambertianAmbient} we show results obtained with non-Lambertian objects and natural illumination which consists of diffuse lighting coming from several directions as well as light reflected from walls and other objects in the room. The objects, initially diffuse, were laser scanned to produce ground truth shapes. They were then sprayed with a thin shiny coating to make them specular and pictured under ambient illumination that includes reflectance from surrounding walls and objects and is roughly uniform. We use between 25 and 31 images per model. Although we do have scanned shapes of the objects, very precise registration is required to properly report quantitative error measurements, since even a small pixel shift can cause the normals to change dramatically. In lack thereof, we use a different error estimation method. For each pixel normal, we take the 9x9 patch in which it lies, and select the closest normal from the 3d scan that lies 
in that patch. We apply the same procedure for our method, Sato and Favaro. Note that it appears that Favaro achieves best results on all lambertian objects and not winning in all non-lambertian ones.

\renewcommand{\tabcolsep}{0.1cm}
    \begin{table}[htbp]
  \centering
  \caption{\textbf{Uniformly distributed distant point sources}: The table shows the mean normal angle errors obtained with each algorithm. In this scenario, our method is comparable to Sato.}
\begin{tabular}{rccc}
\multicolumn{1}{l|}{\textbf{Experiment set}} & \textbf{\specialcell{Our\\method}} & Sato & Favaro \\
\hline
\multicolumn{1}{l|}{Terracotta warrior} & 12.7  & 12.9 & \textbf{11.7} \\
\multicolumn{1}{l|}{Doll} & \textbf{10.6}  & 12.6 & 11.8 \\
\multicolumn{1}{l|}{Rough ball} & \textbf{5.9}  & 16.0 & 43.2 \\
\multicolumn{1}{l|}{Relief sculpture} & 22.3  & 25.7 & \textbf{9.5} \\

\end{tabular}%
  \label{table:MixedReflectanceUniform}%
\end{table}%

\renewcommand{\tableScaleSize}{0.175}
\begin{figure}[ht]
\centering
\begin{tabular}{cccc}
\import{Sources/res/}{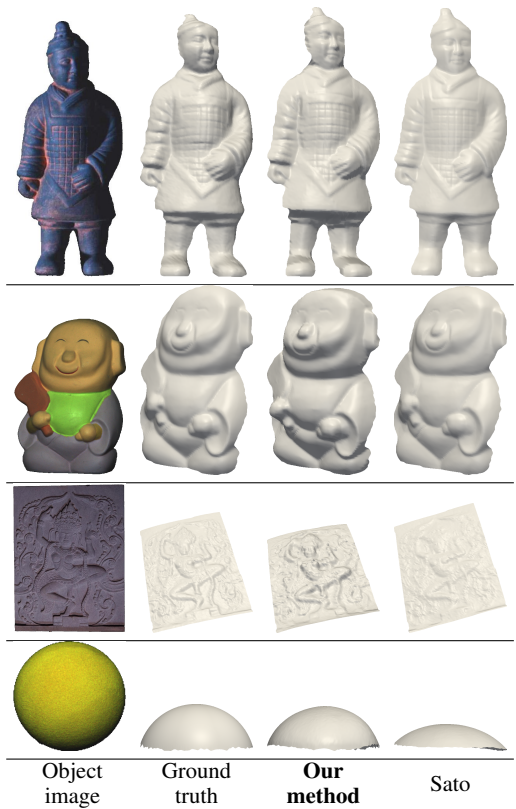}
\specialcell{Object\\image}&\specialcell{Ground\\truth}&\textbf{\specialcell{Our\\method}}&Sato\\
\hline
\end{tabular}
\caption{Objects lit by \newtext{uniformly distributed directional light sources.} The top 3 objects are Lambertian and the bottom object is a rough, non-Lambertian ball. As light comes from many directions, many of the input images exhibit significant attached shadows. These reconstructions demonstrate the robustness of our method to attached shadows.}
\label{fig:MixedReflectanceUniform}
\end{figure}

\renewcommand{\tabcolsep}{0.1cm}
\begin{table}[htbp]
  \centering
  \caption{\textbf{Lambertian objects, non-uniform distant point sources}: The table shows the mean normal angle errors obtained with each algorithm.}
\begin{tabular}{rccc}
\multicolumn{1}{l|}{\textbf{Experiment set}} & \textbf{\specialcell{Our\\method}} & Sato & Favaro\\
\hline
\multicolumn{1}{l|}{Redfish} & 14.0  & 44.3 & \textbf{5.6} \\
\multicolumn{1}{l|}{Octopus} & 19.1  & 30.2 & \textbf{6.64} \\
\multicolumn{1}{l|}{Rock} & 16.4 & 22.1 & \textbf{11.61} \\
\multicolumn{1}{l|}{Horse} & 9.6 & 23.9 & \textbf{4.8} \\
\multicolumn{1}{l|}{Owl} & 10.4 & 24.2 & \textbf{6.63} \\
\multicolumn{1}{l|}{Cat} & 14.6 & 26.6 & \textbf{5.37} \\
\multicolumn{1}{l|}{Buddha} & 11.4  & 22.8 & \textbf{4.98} \\

\end{tabular}%
  \label{table:LambertianNonUniform}%
\end{table}%

\renewcommand{\tableScaleSize}{0.175}
\begin{figure}[ht]
\centering
\begin{tabular}{cccc}
\import{}{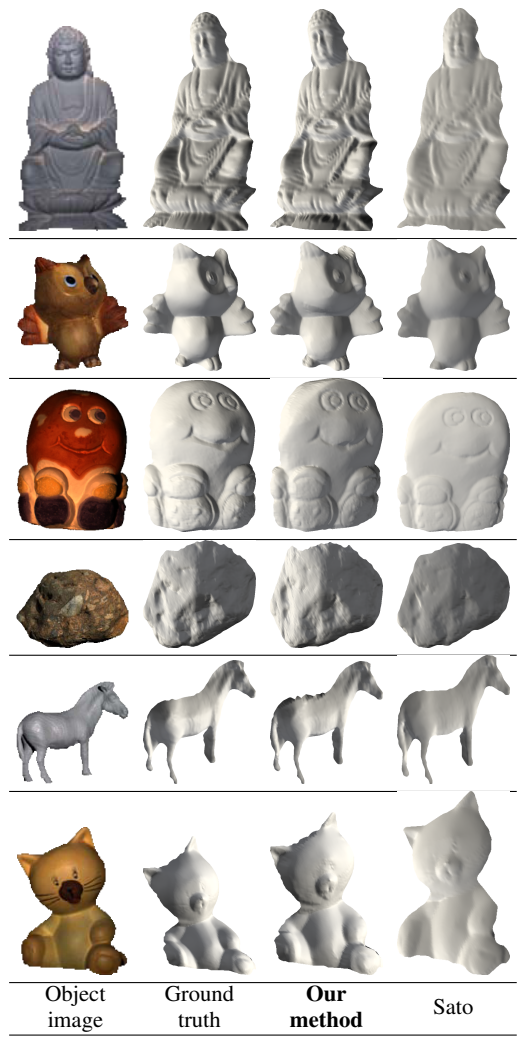}
\specialcell{Object\\image}&\specialcell{Ground\\truth}&\textbf{\specialcell{Our\\method}}&Sato\\
\hline
\end{tabular}
\caption{Lambertian objects lit by distant point sources.}
\label{fig:LambertianNonUniform}
\end{figure}

\renewcommand{\tabcolsep}{0.1cm}
\begin{table}[htbp]
  \centering
  \caption{\textbf{Non-Lambertian objects, ambient lighting}: The table shows the mean normal angle errors obtained with each algorithm.}
\begin{tabular}{rccc}
\multicolumn{1}{l|}{\textbf{Experiment set}} & \textbf{\specialcell{Our\\method}} & Sato & Favaro \\
\hline
\multicolumn{1}{l|}{Duck} & \textbf{7.3}  & 27.4 & 77.6 \\
\multicolumn{1}{l|}{Rabbit} & \textbf{8.1}  & 23.4 & 74.6 \\
\multicolumn{1}{l|}{Rooster} & \textbf{10.4}  & 20 & 77.4 \\
\multicolumn{1}{l|}{Onion} & \textbf{7.5}  & 28.8 & 15.02 \\
\multicolumn{1}{l|}{Pineapple} & \textbf{10.5}  & 15.5 & 27.5 \\

\end{tabular}%
  \label{table:NonLambertianAmbient}%
\end{table}%

\renewcommand{\tableScaleSize}{0.175}
\begin{figure}[ht]
\centering
\begin{tabular}{cccc}
\import{}{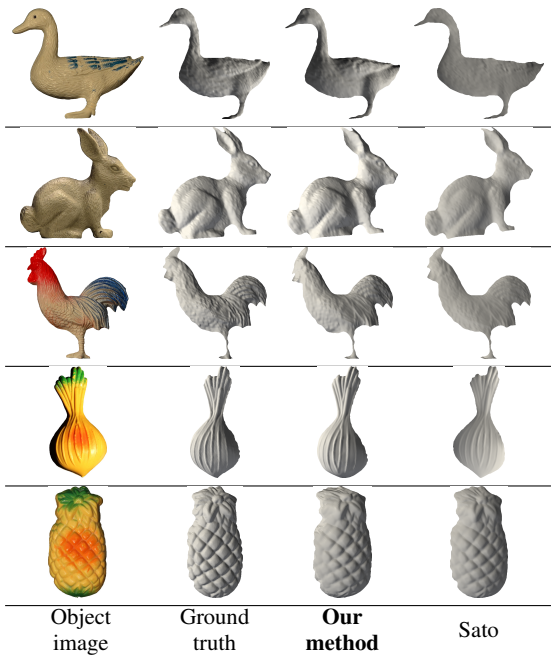}
\specialcell{Object\\image}&\specialcell{Ground\\truth}&\textbf{\specialcell{Our\\method}}&Sato\\
\hline
\end{tabular}
\caption{Non-Lambertian objects lit by ambient illumination.}
\label{fig:NonLambertianAmbient}
\end{figure}

\section{Conclusion}
We have presented an algorithm for Photometric Stereo that uses a generic approach in which intensity measurement vectors are embedded into the unit hemisphere of forward-facing surface normals. We have shown that the algorithm successfully reconstructs objects under quite general lighting conditions and reflectance properties that do not need to be known in advance.
Further work could involve developing better distance measures for finding pixel neighborhoods. For a better approximation of the Laplacian, local geometric analysis at a pixel can be extended beyond a tangent plane to take into account the local geometric curvature. \no{The steps in our algorithm , such as \cite{Frankot88amethod}, could be combined to a unified optimization problem}. Finally, the algorithm can be further used in other scenarios that require hemispherical embedding.

\section{Appendix}
Our proofs rely on a spherical harmonic representation of reflectance. Below we introduce notation, which for the most part follows the notation of~\cite{RonenSH1}. Spherical harmonics are orthogonal functions from the unit sphere to the complex plain $Y_{nm}:{\cal S}^2 \rightarrow {\cal C}$, $n=0,1,2...$ and $-n \le m \le n$, defined as
\begin{equation}
Y_{nm}(\theta,\phi) = \sqrt{\frac{(2n+1)}{4\pi} \frac{(n-|m|)!}{(n+|m|)!}} P_{n|m|}(\cos\theta) e^{im\phi},
\end{equation}
where $P_{nm}$ are the {\em associated Legendre functions}, defined as
\begin{equation}  \label{eq:assoc_legendres}
P_{nm}(z) = \frac{(1-z^2)^{m/2}}{2^n n!} \frac{d^{n+m}}{dz^{n+m}}(z^2-1)^n. \end{equation}
Each harmonic function $Y_{nm}$ is a polynomial of degree $n$ of the variables $(x,y,z)$ on the unit sphere. We will therefore interchangeably use $Y(\theta,\phi)$ and $Y(p)$ with $p=(x,y,z)^T$ and $x=\sin\theta\cos\phi$, $y=\sin\theta\sin\phi$, and $z=\cos\theta$.

To work with real functions we use the notation $Y^e_{nm}=\mathrm{Re}(Y_{nm})$ and $Y^o_{nm}=\mathrm{Im}(Y_{nm})$ ($1 \le m \le n$), which denote the even and odd components of the harmonics. Note that the zonal harmonics ($Y_{n0}$) are real, and that the real spherical harmonics too are orthogonal. In addition, to simplify notation, we interchangeably use a single subscript notation $Y_s$ with $s=1,2,3,...$. For this notation we order the real harmonics by their order $n$ and within every order by $m$, placing each even harmonic before its corresponding odd one, i.e., $Y_{00},Y_{10},Y^e_{11},Y^o_{11},Y_{20},...$.

We will be interested in surfaces whose reflectance kernel is isotropic and band limited. An example for such kernels is the Lambertian reflectance kernel $k(\theta)=\max(\cos\theta,0)$, which can be approximated to a great accuracy with the first nine harmonics ($n \le 2$). As is shown in \cite{RonenSH1}, the reflectance function for such surfaces can be expressed as a convolution of the lighting function with the reflectance kernel. As a consequence, the intensity $I_p$ at a pixel $p$ with surface normal $n_p$ and albedo $\rho_p$ can be expressed in the following form. Let $\ell(\theta,\phi)$ denote the environment lighting as a function of direction, and let $\ell(\theta,\phi) = \sum_{s=1}^\infty \ell_s Y_s(\theta,\phi)$ be its harmonic decomposition, then
\begin{equation}  \label{eq:intensity}
I_p = \rho_p \sum_{s=1}^\infty \alpha_s \ell_s k_s Y_s(n_p),
\end{equation}
where $\alpha_s$ are constants due to the Funk-Hecke (convolution) theorem, $k(\theta)=\sum_{n=0}^\infty k_{(n)} Y_{n0}(\theta,\phi)$ is the harmonic expansion of the kernel and $k_{s(n,m)}=k_{(n)}$.

For Lambertian surfaces we are interested in harmonics up to order 2. The kernel coefficients and the Funk-Hecke constants for these orders are
\begin{align}  \label{eq:k_lambert}
k_{(0)}&=\frac{\sqrt{\pi}}{2}   & \alpha_{(0)}&=\pi \nonumber \\
k_{(1)}&=\sqrt{\frac{\pi}{3}}   & \alpha_{(1)}&=\frac{2\pi}{3} \\
k_{(2)}&=\frac{\sqrt{5\pi}}{8} & \alpha_{(2)}&=\frac{\pi}{4}, \nonumber
\end{align}
and the harmonics are
\begin{align}
Y_1 = Y_{00}   &=\frac{1}{\sqrt{4\pi}} &
Y_2 = Y_{10}   &=\sqrt{\frac{3}{4\pi}}z \nonumber \\
Y_3 = Y_{11}^e &=\sqrt{\frac{3}{4\pi}}x &
Y_4 = Y_{11}^o &=\sqrt{\frac{3}{4\pi}}y \nonumber \\
Y_5 = Y_{20}   &=\frac{1}{2}\sqrt{\frac{5}{4\pi}}(3z^2-1) &
Y_6 = Y_{21}^e &=3\sqrt{\frac{5}{12\pi}}xz \label{eq:9harmonics} \\
Y_7 = Y_{21}^o &=3\sqrt{\frac{5}{12\pi}}yz &
Y_8 = Y_{22}^e &=\frac{3}{2}\sqrt{\frac{5}{12\pi}}(x^2-y^2) \nonumber \\
Y_9 = Y_{22}^o &=3\sqrt{\frac{5}{12\pi}}xy. \nonumber & &
\end{align}

%
% \documentclass[letterpaper]{article}
% \usepackage{times}
% \usepackage{epsfig}
% \usepackage{graphicx}
% \usepackage{amsmath}
% \usepackage{amssymb}
% \usepackage{import}
% \usepackage{bbm}

%\newenvironment{claim}[1]{\par\noindent{\textbf{Claim:}}\space#1}{}
%\newenvironment{claimproof}[1]{\par\noindent{\textbf{Proof:}}\space#1}{\hfill $\blacksquare$}

% \begin{document}

%\section{Intensity distances to geodesic distances for general isotropic reflectance kernels}

\subsection{Equivalence of distances for isotropic reflectance kernels}

\begin{claim}{}\label{sh_isotropic_reflectance_proof_appendix}
Under uniformly distributed \newtext{directional} light sources and an isotropic, band limited reflectance kernel, the $\ell_2$ distances of normalized intensity vectors $\hat v_p$ and $\hat v_q$ of points $p$ and $q$ with \textit{nearby} normals $n_p$ and $n_q$ are proportional to the spherical distance $d(n_p,n_q)$ between the corresponding surface normals, i.e.,
$$\|\hat v_p - \hat v_q\|_2 = c \cdot d(n_p,n_q),$$
where the constant $c$ depends on the reflectance kernel.
\end{claim}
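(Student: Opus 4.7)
The plan is to reduce $\|\hat v_p-\hat v_q\|^2$ to a scalar function of $n_p\!\cdot\! n_q$ using spherical symmetry, then Taylor-expand around $n_p\!\cdot\! n_q=1$.

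First I would write the intensity at $p$ under light direction $l_i$ as $k(n_p\!\cdot\! l_i)$, where $k$ is the isotropic, band-limited reflectance kernel (it depends only on the angle between normal and light). Since the $l_i$ are uniform on $S^2$, I would approximate sums by integrals $\frac{1}{|S^2|}\int_{S^2}(\cdot)\,dl$ (the claim is stated in the limit where the empirical distribution matches the uniform one). By rotational symmetry of the uniform measure, the norm $\|v_p\|^2\propto\int_{S^2}k(n_p\!\cdot\! l)^2\,dl$ is independent of $n_p$; call it $\kappa^2$. Hence $\hat v_p\cdot \hat v_q=\kappa^{-2}\int_{S^2}k(n_p\!\cdot\! l)\,k(n_q\!\cdot\! l)\,dl$, and this integral depends only on $n_p\!\cdot\! n_q$.

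Second, I would invoke the Funk–Hecke theorem (or equivalently, expand $k$ in Legendre polynomials $k(t)=\sum_{\ell\le L}c_\ell P_\ell(t)$, which terminates because $k$ is band-limited, and apply the addition theorem for spherical harmonics). This yields a closed form
\[
K(n_p\!\cdot\! n_q)\;:=\;\int_{S^2}k(n_p\!\cdot\! l)\,k(n_q\!\cdot\! l)\,dl\;=\;\sum_{\ell\le L}\frac{4\pi\,c_\ell^{\,2}}{2\ell+1}\,P_\ell(n_p\!\cdot\! n_q),
\]
so $\hat v_p\cdot\hat v_q=K(n_p\!\cdot\! n_q)/K(1)$ and $\|\hat v_p-\hat v_q\|_2^2=2\bigl(1-K(\cos d)/K(1)\bigr)$ with $d=d(n_p,n_q)$.

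Finally, for \emph{nearby} normals I would Taylor-expand: $\cos d=1-\tfrac{d^2}{2}+O(d^4)$, and $K(\cos d)=K(1)-\tfrac{d^2}{2}K'(1)+O(d^4)$. Substituting gives
\[
\|\hat v_p-\hat v_q\|_2^2 \;=\; \frac{K'(1)}{K(1)}\,d^2 + O(d^4),
\]
so $\|\hat v_p-\hat v_q\|_2=c\,d(n_p,n_q)$ with $c=\sqrt{K'(1)/K(1)}$ depending only on the Legendre coefficients $c_\ell$ of the kernel. The main obstacle is justifying that $K'(1)>0$ (otherwise $c=0$ is degenerate); this follows because $K'(1)=\sum_{\ell\le L}\frac{4\pi c_\ell^2}{2\ell+1}\,\tfrac{\ell(\ell+1)}{2}$, which is strictly positive as long as the kernel has any non-constant ($\ell\ge1$) spherical-harmonic content — a mild non-degeneracy assumption naturally satisfied by any physically meaningful reflectance kernel.
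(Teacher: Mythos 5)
Your proof is correct and rests on the same two pillars as the paper's: orthonormality of the spherical harmonics under uniformly distributed directional lights (which diagonalizes the Gram structure of the intensity vectors, i.e., $L_{st}=\mathbbm{1}_{s=t}$), and a second-order Taylor expansion in the angle for nearby normals. The difference is in the middle step. The paper keeps the sum over individual harmonics $Y_s$, rotates coordinates so that $n_p=[0,0,1]^T$, and argues each term is an associated Legendre function, hence a polynomial in $\cos\theta$, which it expands term by term with unspecified coefficients $a_{sr}$. You instead invoke Funk--Hecke / the addition theorem to collapse each degree-$\ell$ block into $\frac{4\pi c_\ell^2}{2\ell+1}P_\ell(n_p\cdot n_q)$, giving a closed form $K(t)$ for the un-normalized inner product. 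This buys you two things the paper does not deliver: an explicit expression for the constant, $c^2=K'(1)/K(1)=\left(\sum_{\ell}\frac{c_\ell^2}{2\ell+1}\frac{\ell(\ell+1)}{2}\right)\Big/\left(\sum_{\ell}\frac{c_\ell^2}{2\ell+1}\right)$, and a genuine proof of non-degeneracy via $P_\ell'(1)=\ell(\ell+1)/2$, identifying precisely when $c>0$ (the kernel must have some $\ell\ge 1$ content). The paper's corresponding justification --- ``$b$ is positive for sufficiently small $\theta$, since $v_p^Tv_q<a$'' --- is essentially circular, because strict inequality in Cauchy--Schwarz is exactly what fails in the degenerate case; your argument closes that gap. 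One cosmetic caveat: you model the intensity directly as $k(n_p\cdot l)$ rather than through the harmonic expansion $\sum_s \ell_{ks}c_sY_s(n_p)$ of \eqref{eq:intensity}; for a directional (delta) light these coincide up to the Funk--Hecke normalization of the coefficients, so nothing is lost, but it would be worth one sentence relating your Legendre coefficients $c_\ell$ to the paper's $c_s=\alpha_s k_s$.
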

\begin{proof}{}
We prove the claim by evaluating the expression:
$$ \|\hat v_p - \hat v_q\|_2^2 =
%\hat v_p^T\hat v_p - 2\hat v_p^T\hat v_q + \hat v_q^T\hat v_q =
2 - 2\hat v_p^T\hat v_q. $$
The (un-normalized) vector \(v_{p}\) contains the intensities observed at pixel \(p\) over a set of images \(k=1..K\), each lit by a point light source from direction \(l_k \in {\cal S}^2\) whose magnitudes are equal.
Using \eqref{eq:intensity}, the intensity $I_{kp}$ of $p$ can be written as,
$$ I_{kp}=\rho_p \sum_{s=1}^\infty \ell_{ks} c_s Y_s(n_p), $$
where \(\rho_p\) and \(n_p\) respectively are the albedo and normal at $p$, \(\ell_{ks}\) are the harmonic coefficients of the light in the $k$'th image, and \(c_s=\alpha_s k_s\) are constants whose values depend on the reflectance kernel. As we will be interested in normalized intensity vectors $\hat v_p$, we can assume w.l.o.g. that \(\rho_p=1\) and likewise that the lights are of unit magnitudes .

The inner product of two intensity vectors \(v_{p}\) and \(v_{q}\) is given by,
\[ v_{p}^Tv_{q}= \sum_{k}^K \left ( \sum_{s=1}^\infty \ell_{ks} c_s Y_s(n_p)\right) \left (\sum_{t=1}^\infty \ell_{kt} c_t Y_t(n_q)\right) = \sum_{s=1}^\infty \sum_{t=1}^\infty N_{st} L_{st},\]
%$$=\rho_i\rho_j \sum_{k}^K \sum_{s=1}^\infty \sum_{t=1}^\infty \ell_{ks} c_s k_s Y_s(n_i) \ell_{kt} c_t k_t Y_t(n_j)$$
%$$=\rho_i\rho_j  \sum_{s=1}^\infty \sum_{t=1}^\infty c_s k_s Y_s(n_i) c_t k_t Y_t(n_j)\sum_{k}^K \ell_{ks} \ell_{kt} $$
where,
$$L_{st}=\sum_{k}^K \ell_{ks} \ell_{kt}$$
$$N_{st}=c_s k_s Y_s(n_i) c_t k_t Y_t(n_j).$$

Under the assumption that the light in the $K$ images is distributed uniformly $L_{st}=\mathbbm{1}_{s=t}$. This is because for a point source light at direction \(l_{k}\), which is expressed as a Dirac delta function \(\delta_{l_{k}}\), \(\ell_{ks}=<\delta_{l_{k}},Y_s>=Y_s(l_{k})\), so that,
$$L_{st}=\sum_{k}^K Y_s(l_{k}) Y_t(l_{k}),$$
and for uniform light of unit intensity, we get
$$L_{st} \approx \int_{S^2}Y_s(l)  Y_t(l) dl=\mathbbm{1}_{s=t}$$
due to the orthonormality of the spherical harmonics. The inner product $v_{p}^Tv_{q}$  therefore simplifies to
$$v_{p}^Tv_{q} = \sum_{s=1}^\infty N_{s}, $$
where \(N_s=c^2_s Y_s(n_p) Y_s(n_q)\).

We notice next that $N_s$ can be expressed as a univariate polynomial in $z=\cos\theta$, where $\theta$ is the angle between $n_p$ and $n_q$. As the inner product $v_{p}^Tv_{q}$ and the spherical harmonics are invariant to a global rotation of the normals, we can orient our coordinate system so that \(n_p=[0,0,1]^T\) and \(n_q=[\sin \theta, 0, \cos \theta]^T\) (and \(\phi=0\)). In this coordinate frame $Y_s(n_p)$ is constant and $Y_s(n_q)$ is a (scaled) associated Legendre function, which is polynomial of degree $n$ in $z=\cos\theta$. We can therefore write
$$N_s = \sum_{r=0}^n a_{sr}z^r$$
with coefficients $a_{sr}$ that depend on the reflectance kernel and the harmonic order, $s$. Since $n_p$ and $n_q$ are nearby, we use the Taylor approximation \(\cos\theta=1-\frac{\theta^2}{2}+O(\theta^4)\), which yields,
$$N_s = \sum_{r=0}^n a_{sr}(1-\frac{\theta^2}{2})^r + O(\theta^4) = a_{s}-b_{s}\frac{\theta^2}{2}+O(\theta^4),$$
where $a_s=\sum_{r=0}^n a_{sr}$ and $b_s=\sum_{r=1}^n r a_{sr}$. We now have,
$$v_{p}^Tv_{q} = \sum_{s=1}^\infty \left(a_{s} - b_{s}\frac{\theta^2}{2} + O(\theta^4) \right)$$
We further assume that the reflectance kernel is band limited, so that harmonic terms of orders $n>N$ for a finite $N$ can be omitted. Therefore,
$$v_{p}^Tv_{q} = \sum_{s=1}^S \left(a_{s} - b_{s}\frac{\theta^2}{2}\right) + O(N^2\theta^4),$$
where $S=(N+1)^2$. We further denote $a=\sum_{s=1}^S a_s$ and $b=\sum_{s=1}^S b_s$.

$v_{p}^Tv_{p}$ can be computed simply by plugging $\theta=0$ in the previous expression, obtaining $v_{p}^Tv_{p} = a$, and $v_q^Tv_q=v_p^Tv_p$ due to rotation invariance.

We can now evaluate the inner product between the normalized intensity vectors,
$$\hat v_{p}^T \hat v_{q} = \frac{v_{p}^Tv_{q}}{\|v_{p}\| \|v_{q}\|} =
\frac{a - b \frac{\theta^2}{2}}{a}+O(N^2\theta^4) = $$
$$1-\left(\frac{(c\theta)^2}{2}\right)+O(N^2\theta^4),$$
where $c=\sqrt{b/a}$. Note that $a$ is positive since $a=\|v_p\|^2$ and $b$ is positive for sufficiently small $\theta$, since $v_p^T v_q<a$. We conclude that, up to $O(N^2\theta^4)$ terms,
%$$\hat{v_{p}}^T \hat{v_{q}}=1-\frac{(c\theta)^2}{2}=\cos(c\theta)$$
$$\|\hat v_p - \hat v_q\|_2 = \sqrt{2 - 2\hat v_p^T\hat v_q} = c\theta = c \cdot d(n_p,n_q)$$

\end{proof}

%\end{document}

%\newpage
%
% \documentclass[letterpaper]{article}
% \usepackage{times}
% \usepackage{epsfig}
% \usepackage{graphicx}
% \usepackage{amsmath}
% \usepackage{amssymb}
% \usepackage{import}
% \usepackage{bbm}

% \newenvironment{claim}[1]{\par\noindent{\textbf{Claim:}}\space#1}{}
% \newenvironment{claimproof}[1]{\par\noindent{\textbf{Proof:}}\space#1}{\hfill $\blacksquare$}

% \begin{document}

\subsection{Intensity distances to geodesic distances for the Lambertian reflectance kernel}

\begin{claim}{}\label{sh_isotropic_reflectance_proof_appendix_Lapbertian}
The reflectance-dependent constant $c$ can be computed analytically for Lambertian reflectance, and is approximately~0.93.
\end{claim}
\begin{proof}{}
As proved in the previous claim,
$$\|\hat v_p - \hat v_q\|_2 = c \cdot d(n_p,n_q).$$
We now wish to calculate the constant $c$ for the Lambertian kernel. To this end we assume $n_p=(0,0,1)^T$, $n_q=(\sin\theta,0,\cos\theta)^T$ and compute $N_{s}=\alpha^2_{(n)} k^2_{(n)} Y_s(n_p) Y_s(n_q)$ as a function of $\theta$.
Using~\eqref{eq:k_lambert}-\eqref{eq:9harmonics} we get
% \begin{align*}
% N_{1}&=\alpha_1^2Y_1(n_i)Y_1(n_j)=\pi^2\frac{1}{4 \pi}=\frac{\pi}{4} \\
% N_{2}&=\alpha_2^2Y_2(n_i)Y_2(n_j)=\frac{4\pi^2}{9}\frac{3}{4 \pi}\cos \theta_i\cos \theta_j=\frac{\pi}{3}\cos \theta_i\cos \theta_j \\
% N_{3}&=\alpha_3^2Y_3(n_i)Y_3(n_j)=\frac{\pi}{3}\sin\theta_i\sin\theta_j \\
% N_{4}&=\alpha_4^2Y_4(n_i)Y_4(n_j)= 0 \\
% N_{5}&=\alpha_5^2Y_5(n_i)Y_5(n_j)=\frac{5 \pi}{256}(3 \cos^2 \theta_{i}-1)(3 \cos^2 \theta_{j}-1) \\
% N_{6}&=\alpha_6^2Y_6(n_i)Y_6(n_j)=\frac{\pi^2}{16} %\frac{15}{4\pi}\sin\theta_{i}\sin\theta_{j}\cos\theta_i\cos\theta_j=\frac{15\pi}{256}\sin2\theta_{i}\sin2\theta_{j} \\
% N_{7}&=\alpha_7^2Y_7(n_i)Y_7(n_j)=0 \\
% N_{8}&=\alpha_8^2Y_8(n_i)Y_8(n_j)=\frac{\pi^2}{16}
%\frac{15}{16\pi}\sin^{2}\theta_i\sin^{2}\theta_j=\frac{15\pi}{256}\sin^{2}\theta_i\sin^{2}\theta_j \\
% N_{9}&=\alpha_9^2Y_9(n_i)Y_9(n_j)=0
% \end{align*}
\begin{align*}
N_{1}&=\frac{\pi}{4} \\
N_{2}&=\frac{\pi}{3}\cos \theta \\
N_{5}&=\frac{5 \pi}{128}(3 \cos^2 \theta-1),
%N_{3}&= N_4 = N_6 = N_7 = N_8 = N_9 = 0
\end{align*}
and the rest of the terms vanish. Therefore,
$$ v_p^T v_q=\sum_{s=1}^9 N_s = \frac{\pi}{4}+\frac{\pi}{3}\cos \theta+\frac{5 \pi}{128}(3 \cos^2 \theta-1). $$
Replacing $\cos\theta$ by its Taylor approximation, $\cos \theta \simeq 1-\frac{\theta^2}{2}$, we get that,
$$
v_p^T v_q\simeq
%\frac{\pi}{4}+\frac{\pi}{3}(1-\frac{\theta^2}{2})+\frac{10 \pi}{256}(3 (1-\frac{\theta^{2}}{2})^2-1)\\
%&=\frac{\pi}{4}+\frac{\pi}{3}(1- \frac{\theta^2}{2})+\frac{10 \pi}{256}(2-3\theta^2)\\
%&=\frac{\pi}{4}+\frac{\pi}{3}+\frac{20 \pi}{256}+(-\frac{\pi}{6}-\frac{30\pi}{256})\theta^2\\
%&=
\frac{127\pi}{192}-\frac{109\pi}{384}\theta^2$$
and $v_p^Tv_p=v_q^Tv_q$ are obtained by plugging $\theta=0$.
Denote, $a=127\pi/192$ and $b=109\pi/192$, we can now calculate the inner product of the normalized observation vectors $\hat v_p^T \hat v_q$,
$$\hat v_p^T \hat v_q = \frac{v_p^T v_q}{\|v_p\| \|v_q\|} \simeq
\frac{a-b\frac{\theta^2}{2}}{a} =
1-\frac{b}{a}\frac{\theta^2}{2}.$$
Let \(c=\sqrt{\frac{b}{a}}
%=\sqrt{\frac{109}{127}}
\approx 0.93\).
%Then,
%$$\hat v_p^T \hat v_q \simeq 1-\frac{1}{2}(c\theta)^2,$$
We obtain,
%$$\hat{v_{p}}^T \hat{v_{q}}=1-\frac{(c\theta)^2}{2}=\cos(c\theta)$$
%where \(c=\sqrt{\frac{\delta}{S_{pp}}}\).
$$\|\hat v_p - \hat v_q\|_2
%= \sqrt{2 - 2\hat v_p^T\hat v_q} = c\theta
\approx 0.93\theta.$$
\end{proof}

% \end{document}

%\newpage
%\import{./supplementary/}{supplementary_harmonics.tex}

%\begin{acknowledgements}
%If you'd like to thank anyone, place your comments here
%and remove the percent signs.
%\end{acknowledgements}

% BibTeX users please use one of
%\bibliographystyle{spbasic}      % basic style, author-year citations
%\bibliographystyle{spmpsci}      % mathematics and physical sciences
%\bibliographystyle{spphys}       % APS-like style for physics
%\bibliography{}   % name your BibTeX data base

{\small \vspace{-0.1cm}
\bibliographystyle{spmpsci}
\bibliography{myrefs}
}

\end{document}